\newcommand{\diag}{\operatorname{diag}}
\newcommand{\Id}{\operatorname{\boldsymbol{I}}}
\newcommand{\ones}{\operatorname{\boldsymbol{1}}}
\newtheorem{lem}{Lemma}
\title{Scattering GCN: Overcoming Oversmoothness in Graph Convolutional Networks}
\author{%
  Yimeng Min\thanks{Equal contribution; order determined alphabetically} \\
  Mila -- Quebec AI Institute \\
  Montreal, QC, Canada \\
  \texttt{minyimen@mila.quebec} \\
  \And
  Frederik Wenkel\footnotemark[1]\\
  Dept. of Math. and Stat. \\
  Universit\'{e} de Montr\'{e}al \\
  Mila -- Quebec AI Institute \\
  Montreal, QC, Canada \\
  \hspace{-5pt} \texttt{frederik.wenkel@umontreal.ca} \hspace{-5pt} \\
  \And
  Guy Wolf\\
  Dept. of Math. and Stat. \\
  Universit\'{e} de Montr\'{e}al \\
  Mila -- Quebec AI Institute \\
  Montreal, QC, Canada \\
  \texttt{guy.wolf@umontreal.ca} \\
}
\begin{document}

\maketitle

\begin{abstract}
Graph convolutional networks (GCNs) have shown promising results in processing graph data by extracting structure-aware features. This gave rise to extensive work in geometric deep learning, focusing on designing network architectures that ensure neuron activations conform to regularity patterns within the input graph. However, in most cases the graph structure is only accounted for by considering the similarity of activations between adjacent nodes, which limits the capabilities of such methods to discriminate between nodes in a graph. Here, we propose  to augment conventional GCNs with geometric scattering transforms and residual convolutions. The former enables band-pass filtering of graph signals, thus alleviating the so-called oversmoothing often encountered in GCNs, while the latter is introduced to clear the resulting features of high-frequency noise. We establish the advantages of the presented Scattering GCN with both theoretical results establishing the complementary benefits of scattering and GCN features, as well as experimental results showing the benefits of our method compared to leading graph neural networks for semi-supervised node classification, including the recently proposed GAT network that typically alleviates oversmoothing using graph attention mechanisms.
\end{abstract}

\section{Introduction}\label{sect_introduction}
Deep learning approaches are at the forefront of modern machine learning. While they are effective in a multitude of applications, their most impressive results are typically achieved when processing data with inherent structure that can be used to inform the network architecture or the neuron connectivity design. For example, image processing tasks gave rise to convolutional neural networks that rely on spatial organization of pixels, while time series analysis gave rise to recurrent neural networks that leverage temporal organization in their information processing via feedback loops and memory mechanisms. The success of neural networks in such applications, traditionally associated with signal processing, has motivated the emergence of geometric deep learning, with the goal of generalizing the design of structure-aware network architectures from Euclidean spatiotemporal structures to a wide range of non-Euclidean geometries that often underlie modern data.

Geometric deep learning approaches typically use graphs as a model for data geometries, either by constructing them from input data (e.g., via similarity kernels) or directly given as quantified interactions between data points~\citep{bronstein2017geometric}. Using such models, recent works have shown that graph neural networks (GNNs) perform well in multiple application fields, including biology, chemistry and social networks~\citep{gilmer2017neural,hamilton2017inductive,kipf2016semi}. It should be noted that most GNNs consider each graph together with given node features, as a generalization of images or audio signals, and thus aim to compute whole-graph representations. These in turn, can be applied to graph classification, for example when each graph represents the molecular structure of proteins or enzymes classified by their chemical properties~\citep{fout2017protein,de2018molgan,knyazev2018spectral}. 

On the other hand, methods such as graph convolutional networks (GCNs) presented by~\cite{kipf2016semi} consider node-level tasks and in particular node classification.
As explained in~\cite{kipf2016semi}, such tasks are often considered in the context of semi-supervised learning, as typically only a small portion of nodes of the graph possesses labels. In these settings, the entire dataset is considered as one graph and the network is tasked with learning node representations that infer information from node features as well as the graph structure.
However, most state-of-the-art approaches for incorporating graph structure information in neural network operations aim to enforce similarity between representations of adjacent (or neighboring) nodes, which essentially implements local smoothing of neuron activations over the graph~\citep{li2018deeper}.
While such smoothing operations may be sufficiently effective in whole-graph settings, they often cause degradation of results in node processing tasks due to oversmoothing~\citep{li2018deeper,nt2019revisiting}, as nodes become indistinguishable with deeper and increasingly complex network architectures. Graph attention networks \citep{velivckovic2017graph} have shown promising results in overcoming such limitations by introducing adaptive weights for graph smoothing via message passing operations, using attention mechanisms computed from node features and masked by graph edges. However, these networks still essentially rely on enforcing similarity (albeit adaptive) between neighboring nodes, while also requiring more intricate training as their attention mechanism requires gradient computations driven not only by graph nodes, but also by graph edges. We refer the reader to the supplement for further discussion of related work and recent advances in node processing with GNNs.

In this paper, we propose a new approach for node-level processing in GNNs by introducing neural pathways that encode higher-order forms of regularity in graphs. Our construction is inspired by recently proposed geometric scattering networks~\citep{gama2019diffusion,gao2019geometric,zou2019graph}, which have proven effective for whole-graph representation and classification. These networks generalize the Euclidean scattering transform, which was originally presented by~\cite{mallat2012group} as a mathematical model for convolutional neural networks. In graph settings, the scattering construction leverages deep cascades of graph wavelets~\citep{hammond2011wavelets,coifman2006wavelets} and pointwise nonlinearities to capture multiple modes of variation from node features or labels. Using the terminology of graph signal processing, these can be considered as generalized band-pass filtering operations, while GCNs (and many other GNNs) can be considered as relying on low-pass filters only.
Our approach combines together the merits of GCNs on node-level tasks with those of scattering networks known from whole-graph tasks, by enabling learned node-level features to encode geometric information beyond smoothed activation signals, thus alleviating oversmoothing concerns often raised in GCN approaches. We discuss the benefits of our approach and demonstrate its advantages over GCNs and other popular graph processing approaches for semi-supervised node classification, including significant improvements on the DBLP graph dataset from~\cite{pan2016tri}.
\paragraph{Notations:}
We denote matrices and vectors with bold letters with uppercase letters representing matrices and lowercase letters representing vectors. In particular, $\Id_n\in\mathbb{R}^{n\times n}$ is used for the identity matrix and $\ones_n\in \mathbb{R}^n$ denotes the vector with ones in every component. We write $\langle.,.\rangle$ for the standard scalar product in $\mathbb{R}^n$. We will interchangeably consider functions of graph nodes as vectors indexed by the nodes, implicitly assuming a correspondence between a node and a specific index. This carries over to matrices, where we relate nodes to column or row indices. We further use the abbreviation $[n] \coloneqq \{1,\dots,n\}$ where $n\in\mathbb{N}$ and write $\mathbb{N}_0\coloneqq \mathbb{N}\cup\{0\}$.

\section{Graph Signal Processing}\label{sect_gsp}
Let $G = (V,E,w)$ be a weighted graph with $V\coloneqq \{v_1,\dots,v_n\}$ the set of nodes, $E\subset \{\{v_i, v_j\}\in V\times V , i\neq j\}$ the set of (undirected) edges and $w : E \to (0,\infty)$ assigning (positive) edge weights to the graph edges. We note that $w$ can equivalently be considered as a function of $V \times V$, where we set the weights of non-adjacent node pairs to zero. We define a \textit{graph signal} as a function $x: V \rightarrow \mathbb{R}$ on the nodes of $G$ and aggregate them in a signal vector $\boldsymbol{x}\in \mathbb{R}^n$ with the $i^{th}$ entry being $x(v_i)$.

We define the (combinatorial) \textit{graph Laplacian} matrix $\boldsymbol{L}\coloneqq \boldsymbol{D} - \boldsymbol{W}$, where $\boldsymbol{W}\in\mathbb{R}^{n\times n}$ is the \textit{weighted adjacency matrix} of the graph $G$ given by
\[
    \boldsymbol{W}[v_i,v_j] \coloneqq
    \begin{cases} 
    w(v_i,v_j) & \text{if } \{v_i,v_j\}\in E \\
    0 & \text{otherwise}
    \end{cases},
\]
and $\boldsymbol{D}\in\mathbb{R}^{n\times n}$ is the \textit{degree matrix} of $G$ defined by $\boldsymbol{D}\coloneqq \diag(d_1,\dots, d_n)$ with $d_i\coloneqq \deg(v_i)\coloneqq \sum_{j=1}^n \boldsymbol{W}[v_i,v_j]$ being the \textit{degree} of the node $v_i$. In practice, we work with the (symmetric) \textit{normalized Laplacian} matrix
$
    \boldsymbol{\mathcal{L}} \coloneqq \boldsymbol{D}^{-1/2}\boldsymbol{L}\boldsymbol{D}^{-1/2} = \Id_n - \boldsymbol{D}^{-1/2}\boldsymbol{W}\boldsymbol{D}^{-1/2}.
$
It can be verified that $\boldsymbol{\mathcal{L}}$ is symmetric and positive semi-definite and can thus be orthogonally diagonalized as
$
    \boldsymbol{\mathcal{L}} = \boldsymbol{Q}\boldsymbol{\Lambda}\boldsymbol{Q}^T = \sum_{i=1}^n \lambda_i \boldsymbol{q}_i \boldsymbol{q}_i^T,
$
where $\boldsymbol{\Lambda}\coloneqq \diag(\lambda_1,\dots,\lambda_n)$ is a diagonal matrix with the eigenvalues on the main diagonal and $\boldsymbol{Q}$ is an orthogonal matrix containing the corresponding normalized eigenvectors $\boldsymbol{q}_1,\dots,\boldsymbol{q}_n\in\mathbb{R}^n$ as its columns. 

A detailed study (see, e.g.,~\cite{Chung:1997}) of the eigenvalues reveals that $0 = \lambda_1 \leqslant \lambda_2 \leqslant \dots \leqslant \lambda_n \leqslant 2$. We can interpret the $\lambda_i, i\in[n]$ as the frequency magnitudes and the $\boldsymbol{q}_i$ as the corresponding Fourier modes. We accordingly define the \textit{Fourier transform} of a signal vector $\boldsymbol{x}\in \mathbb{R}^n$ by $\boldsymbol{\hat x}[i] = \langle \boldsymbol{x}, \boldsymbol{q}_i \rangle$ for $i\in[n]$. The corresponding inverse Fourier transform is given by $\boldsymbol{x} = \sum_{i=1}^n \boldsymbol{\hat x}[i] \boldsymbol{q}_i$. Note that this can be written compactly as $\boldsymbol{\hat x} = \boldsymbol{Q}^T \boldsymbol{x}$ and $\boldsymbol{x} = \boldsymbol{Q} \boldsymbol{\hat x}$. Finally, we introduce the concept of \textit{graph convolutions}. We define a filter $g: V\rightarrow \mathbb{R}$ defined on the set of nodes and want to convolve the corresponding filter vector $\boldsymbol{g}\in\mathbb{R}^n$ with a signal vector $\boldsymbol{x}\in\mathbb{R}^n$, i.e. $\boldsymbol{g}\star \boldsymbol{x}$. To explicitly compute this convolution, we recall that in the Euclidean setting, the convolution of two signals equals the product of their corresponding frequencies. This property generalizes to graphs \citep{shuman2016vertex} in the sense that $(\widehat{\boldsymbol{g}\star \boldsymbol{x}})[i] = \boldsymbol{\hat g}[i]\boldsymbol{\hat x}[i]$ for $i\in[n]$. Applying the inverse Fourier transform yields
$$
    \boldsymbol{g}\star \boldsymbol{x}
    = \sum_{i=1}^n \boldsymbol{\hat g}[i] \boldsymbol{\hat x}[i] \boldsymbol{q}_i
    = \sum_{i=1}^n \boldsymbol{\hat g}[i] \langle \boldsymbol{q}_i, \boldsymbol{x} \rangle \boldsymbol{q}_i
    = \boldsymbol{Q} \boldsymbol{\widehat{G}} \boldsymbol{Q}^T \boldsymbol{x},
$$
where $\boldsymbol{\widehat{G}} \coloneqq \diag(\boldsymbol{\hat g}) = \diag(\boldsymbol{\hat g}[1], \dots, \boldsymbol{\hat g}[n])$. Hence, convolutional graph filters can be parameterized by considering the Fourier coefficients in $\boldsymbol{\widehat{G}}$. 

Furthermore, it can be verified~\citep{defferrard2016convolutional} that when these coefficients are defined as polynomials $\boldsymbol{\hat g}[i] \coloneqq \sum_k \gamma_k \lambda_i^k$ for $i\in \mathbb{N}$ of the Laplacian eigenvalues in $\boldsymbol{\Lambda}$ (i.e. $\boldsymbol{\widehat{G}} = \sum_k \gamma_k \boldsymbol{\Lambda}^k$), the resulting filter convolution are localized in space and can be written in terms of $\mathcal{L}$ as
$
    \boldsymbol{g}\star \boldsymbol{x} = \sum_k \gamma_k \mathcal{L}^k \boldsymbol{x}
$
without requiring spectral decomposition of the normalized Laplacian. This motivates the standard practice~\citep{kipf2016semi,defferrard2016convolutional,susnjara2015accelerated,liao2019lanczosnet} of using filters that have polynomial forms, which we follow here as well. 

For completeness, we note there exist alternative frameworks that generalize signal processing notions to graph domains, such as~\cite{oyallon2020interferometric}, which emphasizes the construction of complex filters that requires a notion of signal phase on graphs. However, extensive study of such alternatives is out of scope for the current work, which thus relies on the well-established (see, e.g.,~\cite{shuman2013emerging}) framework described here.
\section{Graph Convolutional Network}\label{sect_graph conv. net}
Graph convolutional networks (GCNs), introduced in~\cite{kipf2016semi}, consider semi-supervised settings where only a small potion of the nodes is labeled. They leverage intrinsic geometric information encoded in the adjacency matrix $\boldsymbol{W}$ together with node labels by constructing a convolutional filter parametrized by $\boldsymbol{\hat{g}}[i] \coloneqq \theta (2 - \lambda_i)$, where the choice of a single learnable parameter is made to avoid overfitting. This parametrization yields a convolutional filtering operation given by
\begin{equation}\label{eq_conv. parametrization}
    \boldsymbol{g}_\theta \star \boldsymbol{x} = \theta \left(\Id_n + \boldsymbol{D}^{-1/2} \boldsymbol{W} \boldsymbol{D}^{-1/2}\right) \boldsymbol{x}.
\end{equation}
The matrix $\Id_n + \boldsymbol{D}^{-1/2} \boldsymbol{W} \boldsymbol{D}^{-1/2}$ has eigenvalues in $[0,2]$. This could lead to vanishing or exploding gradients. This issue is addressed by the following renormalization trick~\citep{kipf2016semi}: $\Id_n + \boldsymbol{D}^{-1/2} \boldsymbol{W} \boldsymbol{D}^{-1/2} \rightarrow \boldsymbol{\tilde D}^{-1/2} \boldsymbol{\tilde W} \boldsymbol{\tilde D}^{-1/2}$, where $\boldsymbol{\tilde W}\coloneqq \Id_n + \boldsymbol{W}$ and $\boldsymbol{\tilde D}$ a diagonal matrix with $\boldsymbol{\tilde D}[v_i,v_i] \coloneqq \sum_{j=1}^n \boldsymbol{\tilde W}[v_i,v_j]$ for $i\in[n]$. This operation replaces the features of the nodes by a weighted average of itself and its neighbors. Note that the repeated execution of graph convolutions will enforce similarity throughout higher-order neighborhoods with order equal to the number of stacked layers. Setting
$
    \boldsymbol{A} \coloneqq \boldsymbol{\tilde D}^{-1/2} \boldsymbol{\tilde W} \boldsymbol{\tilde D}^{-1/2},
$
the complete layer-wise propagation rule takes the form
$
 \boldsymbol{h}_j^\ell = \sigma \big( \sum_{i=1}^{N_{\ell-1}} \theta_{ij}^\ell \boldsymbol{A} \boldsymbol{h}_i^{\ell-1} \big),
$
where $\ell$ indicates the layer with $N_\ell$ neurons, $\boldsymbol{h}_j^\ell \in \mathbb{R}^n$ the activation vector of the $j^{th}$ neuron, $\theta_{ij}^\ell$ the learned parameter of the convolution with the $i^{th}$ incoming activation vector from the preceding layer and $\sigma(.)$ an element-wise applied activation function. Written in matrix notation, this gives
\begin{equation}\label{eq_gcn}
    \boldsymbol{H}^\ell = \sigma \left( \boldsymbol{A} \boldsymbol{H}^{\ell-1} \boldsymbol{\Theta}^\ell \right),
\end{equation}
where $\boldsymbol{\Theta}^\ell\in\mathbb{R}^{N_{\ell-1}\times N_\ell}$ is the weight-matrix of the $\ell^{th}$ layer and $\boldsymbol{H}^\ell\in\mathbb{R}^{n\times N_\ell}$ contains the activations outputted by the $\ell^{th}$ layer.
 
We remark that the above explained GCN model can be interpreted as a low-pass operation. For the sake of simplicity, let us consider the convolutional operation (Eq.~\ref{eq_conv. parametrization}) before the reparametrization trick. If we observe the convolution operation as the summation
$
    \boldsymbol{g}_\theta \star \boldsymbol{x} = \sum_{i=1}^n \boldsymbol{\gamma}_i \boldsymbol{\hat x}[i] \boldsymbol{q}_i,
$
we clearly see that higher weights $\boldsymbol{\gamma}_i = \theta(2 - \lambda_i)$ are put on the low-frequency harmonics, while high-frequency harmonics are progressively less involved  as $0 = \lambda_1 \leqslant \lambda_2 \leqslant \dots \leqslant \lambda_n \leqslant 2$. This indicates that the model can only access a diminishing portion of the original information contained in the input signal the more graph convolutions are stacked. This observation is in line with the well-known oversmoothing problem~\citep{li2018deeper} related to GCN models. The repeated application of graph convolutions will successively smooth the signals of the graph such that nodes cannot be distinguished anymore.
\section{Geometric Scattering}\label{sect_scattering}
\label{sect_geoscat}
In this section, we recall the construction of geometric scattering on graphs. This construction is based on the \textit{lazy random walk} matrix
$$
    \boldsymbol{P} \coloneqq \frac{1}{2} \big( \Id_n + \boldsymbol{W} \boldsymbol{D}^{-1} \big),
$$
which is closely related to the \textit{graph random walk} defined as a Markov process with transition matrix $\boldsymbol{R}\coloneqq \boldsymbol{W}\boldsymbol{D}^{-1}$. The matrix $\boldsymbol{P}$ however allows self loops while normalizing by a factor of two in order to retain a Markov process.
Therefore, considering a distribution $\boldsymbol{\mu}_0 \in \mathbb{R}^n$ of the initial  position of the lazy random walk, its positional distribution after $t$ steps is encoded by $\boldsymbol{\mu}_t = \boldsymbol{P}^t \boldsymbol{\mu}_0$.

As discussed in~\citep{gao2019geometric}, the propagation of a graph signal vector $\boldsymbol{x}\in\mathbb{R}^n$ by $\boldsymbol{x}_t = \boldsymbol{P}^t \boldsymbol{x}$ performs a low-pass operation that preserves the zero-frequencies of the signal while suppressing high frequencies. In geometric scattering, this low-pass information is augmented by introducing the \textit{wavelet} matrices $\boldsymbol{\Psi}_k\in\mathbb{R}^{n\times n}$ of scale $2^k$, $k\in \mathbb{N}_0$,
\begin{equation}\label{eq_wavelet matrix}
    \begin{cases}
    \boldsymbol{\Psi}_0 \coloneqq \Id_n - \boldsymbol{P}, \\
    
    \boldsymbol{\Psi}_k \coloneqq \boldsymbol{P}^{2^{k-1}} - \boldsymbol{P}^{2^k} = \boldsymbol{P}^{2^{k-1}} \big( \Id_n - \boldsymbol{P}^{2^{k-1}} \big), \quad k\geq 1.
\end{cases}
\end{equation}
This leverages the fact that high frequencies can be recovered with multiscale wavelet transforms, e.g., by decomposing nonzero frequencies into dyadic frequency bands. The operation $(\boldsymbol{\Psi}_k \boldsymbol{x})[v_i]$ collects signals from a neighborhood of order $2^k$, but extracts multiscale differences rather than averaging over them. The wavelets in Eq.~\ref{eq_wavelet matrix} can be organized in a filter bank $\{\boldsymbol{\Psi}_k, \boldsymbol{\Phi}_K\}_{0\leq k\leq K}$, where $\boldsymbol{\Phi}_K\coloneqq \boldsymbol{P}^{2^K}$ is a pure low-pass filter. The telescoping sum of the matrices in this filter bank constitutes the identity matrix, thus enabling to reconstruct processed signals from their filter responses. Further studies of this construction and its properties (e.g., energy preservation) appear in~\cite{perlmutter2019understanding} and related work.

\textit{Geometric scattering} was originally introduced in the context of whole-graph classification and consisted of aggregating \textit{scattering features}. These are stacked wavelet transforms (see Fig.~\ref{fig:GeoSct_illustration}) parameterized via tuples $p \coloneqq (k_1, \dots, k_m)\in \cup_{m \in \mathbb{N}} \mathbb{N}_0^{m}$ containing the bandwidth scale parameters, which are separated by element-wise absolute value nonlinearities\footnote{In a slight deviation from previous work, here $\boldsymbol{U}_p$ does not include the outermost nonlinearity in the cascade.} according to
\begin{equation}
    \label{eq_original node.scattering}
     \boldsymbol{U}_p \boldsymbol{x} \coloneqq \boldsymbol{\Psi}_{k_m} \vert \boldsymbol{\Psi}_{k_{m-1}} \dots \vert \boldsymbol{\Psi}_{k_2} \vert \boldsymbol{\Psi}_{k_1}\boldsymbol{x}\vert \vert \dots \vert,
\end{equation}
where $m$ corresponds to the length of the tuple $p$. The scattering features are aggregated over the whole graph by taking $q^{th}$-order moments over the set of nodes,
\begin{equation}\label{eq_original geom.scattering} \textstyle
    \boldsymbol{S}_{p,q} \boldsymbol{x} \coloneqq \sum_{i=1}^n \vert \boldsymbol{U}_p \boldsymbol{x} [v_i] \vert^q.
\end{equation}

\begin{figure}
\centering
\includegraphics[width=0.6\textwidth]{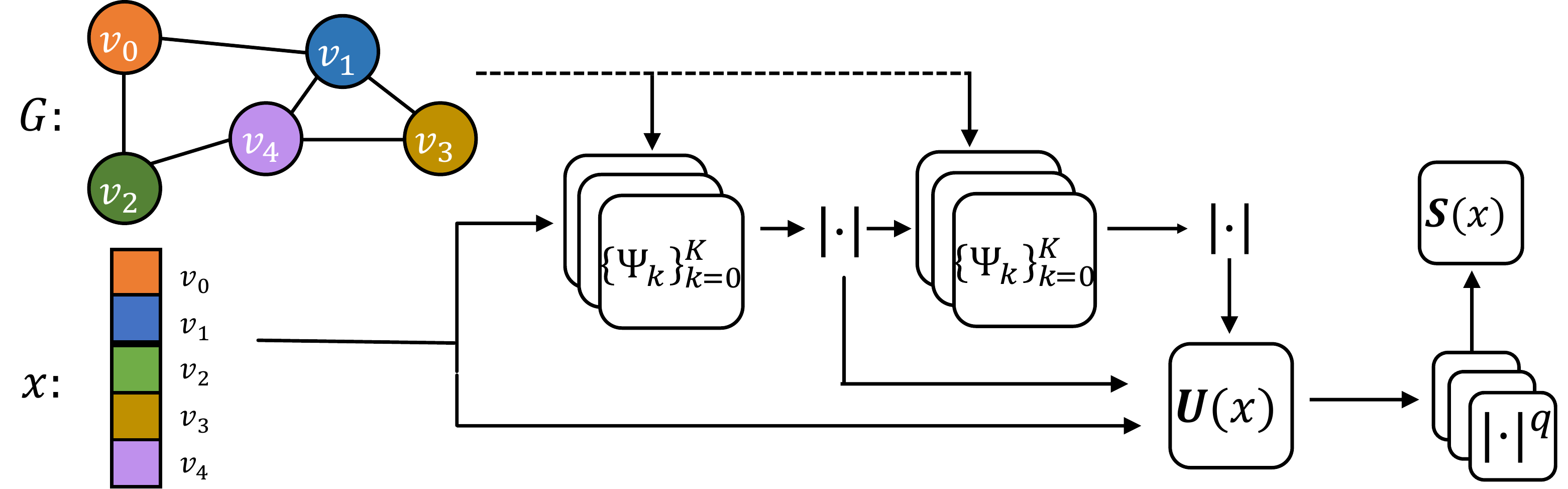}
\caption{Illustration of geom. scattering at the node level ($\boldsymbol{U}(\boldsymbol{x}) = \{\boldsymbol{U}_p \boldsymbol{x} : p \in \mathbb{N}_0^{m}, m = 0,1,2\}$) and at the graph level ($\boldsymbol{S}(\boldsymbol{x}) = \{\boldsymbol{S}_{p,q} \boldsymbol{x} : q \in \mathbb{N}, p \in \mathbb{N}_0^{m}, m = 0,1,2\}$), extracted according to the wavelet cascade in Eqs.~\ref{eq_wavelet matrix}-\ref{eq_original geom.scattering}. While $m \leq 2$ orders are illustrated here, more can be used in general.}
\label{fig:GeoSct_illustration}
\end{figure}

As our work is devoted to the study of node-based classification, we reinvent this approach in a new context, keeping the scattering transforms $\boldsymbol{U}_p$ on a node-level by dismissing the aggregation step in Eq. \ref{eq_original geom.scattering}. 
For each tuple $p$, we define the following scattering propagation rule, which mirrors the GCN rule but replaces the low-pass filter by a geometric scattering operation resulting in
\begin{equation}\label{eq_sct}
    \boldsymbol{H}^\ell = \sigma \left( \boldsymbol{U}_{p} \boldsymbol{H}^{\ell-1} \boldsymbol{\Theta}^\ell\right).
\end{equation}
We note that in practice, we only choose a subset of tuples, which is chosen as part of the network design explained in the following section.

\section{Combining GCN and Scattering Models}\label{sect_combining methods}
To combine the benefits of GCN models and geometric scattering adapted to the node level, we now propose a hybrid network architecture as shown in Fig.~\ref{fig:1}. It combines low-pass operations based on GCN models with band-pass operations based on geometric scattering. To define the layer-wise propagation rule, we introduce
\begin{equation*}
    \boldsymbol{H}_{gcn}^\ell \coloneqq \left[ \boldsymbol{H}_{gcn,1}^\ell \mathbin\Vert \dots \mathbin\Vert \boldsymbol{H}_{gcn,C_{gcn}}^\ell \right]
\quad \text{ and } \quad
    \boldsymbol{H}_{sct}^\ell \coloneqq \left[ \boldsymbol{H}_{sct,1}^\ell \mathbin\Vert \dots \mathbin\Vert \boldsymbol{H}_{sct,C_{sct}}^\ell \right],
\end{equation*}
which are the concatenations of channels $\big\{ \boldsymbol{H}_{gcn,k}^\ell \big\}_{k=1}^{C_{gcn}}$ and $\big\{ \boldsymbol{H}_{sct,k}^\ell \big\}_{k=1}^{C_{sct}}$, respectively. Every $\boldsymbol{H}_{gcn,k}^\ell$ is defined according to Eq.~\ref{eq_gcn} with the slight modification of added biases and powers of $\boldsymbol{A}$,
\begin{figure}
\centering
\includegraphics[width=0.77\textwidth]{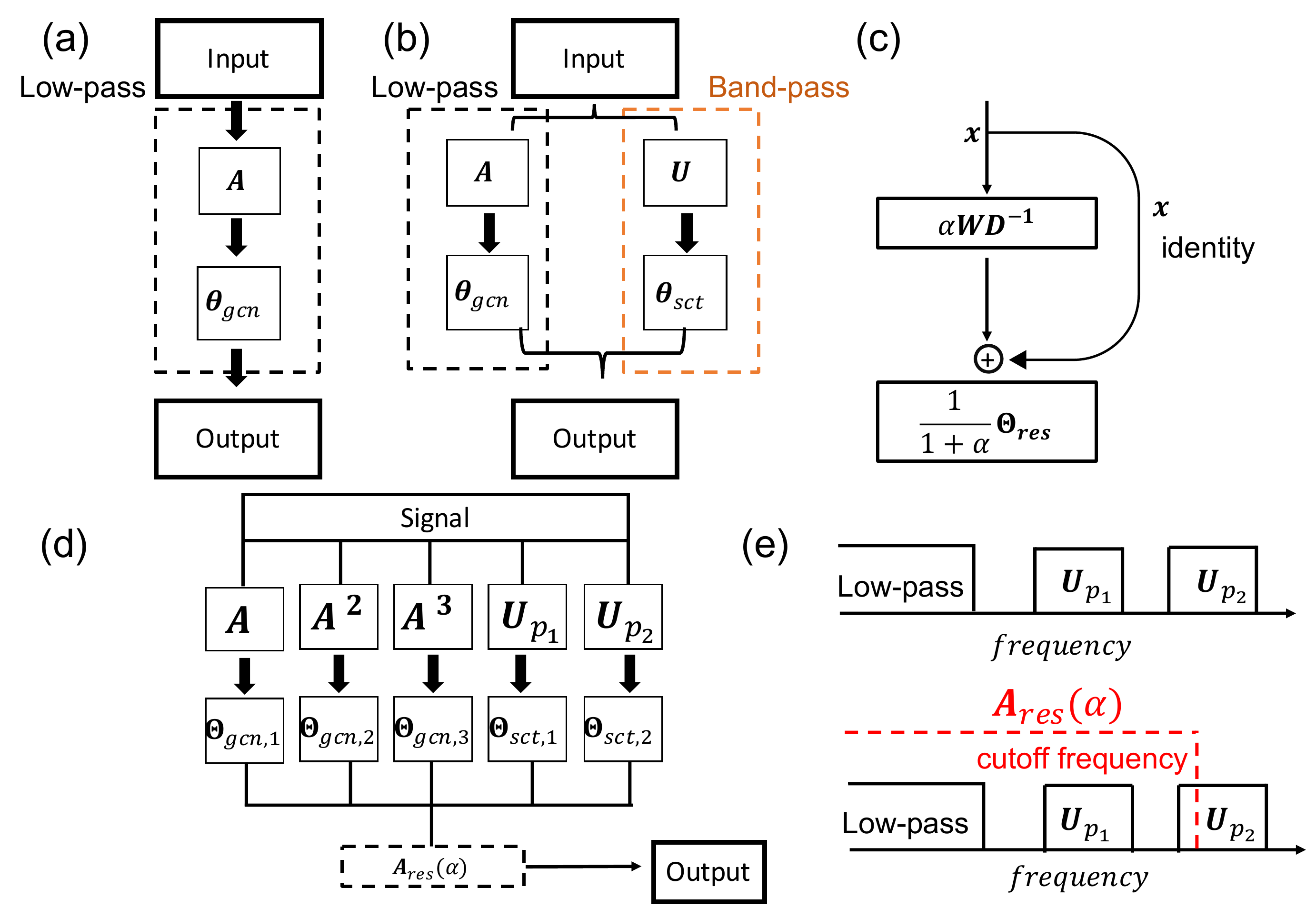}
\caption{(a,b) Comparison between GCN and our network: we add band-pass channels to collect different frequency components; (c) Graph residual convolution layer; (d) Band-pass layers; (e) Schematic depiction in the frequency domain. }
\label{fig:1}
\end{figure}
$$
    \boldsymbol{H}_{gcn,k}^\ell \coloneqq \sigma \left( \boldsymbol{A}^k \boldsymbol{H}^{\ell-1} \boldsymbol{\Theta}_{gcn,k}^\ell + \boldsymbol{B}_{gcn,k}^\ell \right).
$$

Note that every GCN filter uses a different propagation matrix $\boldsymbol{A}^k$ and therefore aggregates information from $k$-step neighborhoods. Similarly, we proceed with $\boldsymbol{H}_{sct,k}^\ell$ according to Eq.~\ref{eq_sct} and calculate
\begin{equation*}
    \boldsymbol{H}_{sct,k}^\ell \coloneqq \sigma \left( \boldsymbol{U}_{p_k} \boldsymbol{H}^{\ell-1} \boldsymbol{\Theta}_{sct,k}^\ell + \boldsymbol{B}_{sct,k}^\ell \right), 
\end{equation*}
where $p_k \in \bigcup_{m\in\mathbb{N}} \mathbb{N}_0^m$, $k=1,\ldots,C_{sct}$ enables scatterings of different orders and scales. Finally, the GCN components and scattering components get concatenated to
\begin{equation}\label{eq_concatenated layer}
    \boldsymbol{H}^\ell \coloneqq \left[ \boldsymbol{H}_{gcn}^\ell \mathbin\Vert \boldsymbol{H}_{sct}^\ell \right].
\end{equation}

The learned parameters are the weight matrices $\boldsymbol{\Theta}_{gcn,k}^\ell, \boldsymbol{\Theta}_{sct,k}^\ell \in\mathbb{R}^{N_{\ell-1}\times N_\ell}$ coming from the convolutional and scattering layers. These are complemented by vectors of the biases $\boldsymbol{b}_{gcn,k}^\ell, \boldsymbol{b}_{sct,k}^\ell \in \mathbb{R}^{N_\ell}$, which are transposed and vertically concatenated $n$ times to the matrices $\boldsymbol{B}_{gcn,k}, \boldsymbol{B}_{sct,k} \in \mathbb{R}^{n\times N_{\ell}}$. To simplify notation, we assume here that all channels use the same number of neurons ($N_\ell$). Waiving this assumption would slightly complicate the notation but works perfectly fine in practice.

In this work, for simplicity, and because it is sufficient to establish our claim, we limit our architecture to three GCN channels and two scattering channels as illustrated in Fig.~\ref{fig:1} (b). Inspired by the aggregation step in classical geometric scattering, we use $\sigma(.) \coloneqq \vert~.~\vert^q$ as our nonlinearity. However, unlike the powers in Eq.~\ref{eq_original geom.scattering}, the $q^{th}$ power is applied at the node-level here instead of being aggregated as moments over the entire graph, thus retaining the distinction between node-wise activations. 

We set the input of the first layer $\boldsymbol{H}^{0}$ to have the original node features as the graph signal. Each subchannel (GCN or scattering) transforms the original feature space to a new hidden space with the dimension determined by the number of neurons encoded in the columns of the corresponding submatrix of $\boldsymbol{H}^{\ell}$. These transformations are learned by the network via the weights and biases. Larger matrices $\boldsymbol{H}^{\ell}$ (i.e., more columns as the number of nodes in the graph is fixed) indicate that the weight matrices have more parameters to learn. Thus, the information in these channels can be propagated well and will be sufficiently represented. 

In general, the width of a channel is relevant for the importance of the captured regularities. A wider channel suggests that these frequency components are more critical and need to be sufficiently learned. Reducing the width of the channel suppresses the magnitude of information that can be learned from a particular frequency window. For more details and analysis of specific design choices in our architecture we refer the reader to the ablation study provided in the supplement.
\section{Graph Residual Convolution}\label{sect_graph res. conv.} 
Using the combination of GCN and scattering architectures, we collect multiscale information at the node level. This information is aggregated from different localized neighborhoods, which may exhibit vastly different frequency spectra. This comes for example from varying label rates in different graph substructures. In particular, very sparse graph sections can cause problems when the scattering features actually learn the difference between labeled and unlabeled nodes, creating high-frequency noise. In the classical geometric scattering used for whole-graph representation, geometric moments were used to aggregate the node-based information, serving at the same time as a low-pass filter. As we want to keep the information localized on the node level, we choose a different approach inspired by skip connections in residual neural networks~\citep{he2016deep}. Conceptually, this low-pass filter, which we call \textit{graph residual convolution}, reduces the captured frequency spectrum up to a cutoff frequency as depicted in Fig.~\ref{fig:1} (e).

The graph residual convolution matrix, governed by the hyperparameter $\alpha$, is given by
$ 
    \boldsymbol{A}_{res}(\alpha) = \frac{1}{\alpha+1} (\boldsymbol{I}_n+\alpha \boldsymbol{W} \boldsymbol{D}^{-1})
$ 
and we apply it after the hybrid layer of GCN and scattering filters. For $\alpha = 0$ we get the identity (no cutoff), while $\alpha \rightarrow \infty$ results in $\boldsymbol{R} = \boldsymbol{W} \boldsymbol{D}^{-1}$. This can be interpreted as an interpolation between the completely lazy (i.e., stationary) random walk and the non-resting (i.e., with no self-loops) random walk $\boldsymbol{R}$.
We apply the graph residual layer on the output $\boldsymbol{H}^\ell$ of the Scattering GCN layer (Eq.~\ref{eq_concatenated layer}). The update rule for this step, illustrated in Fig.~\ref{fig:1} (c), is then expressed by
$ 
\boldsymbol{H}^{\ell+1} = \boldsymbol{A}_{res}(\alpha) \boldsymbol{H}^\ell \boldsymbol{ \Theta}_{res} + \boldsymbol{B}_{res},
$ 
where $\boldsymbol{\Theta}_{res}$ $\in$ $\mathbb{R}^{N\times N_{\ell+1}}$ are learned weights, $\boldsymbol{B}_{res}$ $\in \mathbb{R}^{n \times N_{\ell+1}}$ are learned biases (similar to the notations used previously), and $N$ is the number of features of the concatenated layer $\boldsymbol{H}^\ell$. If $\boldsymbol{H}^{\ell+1}$ is the final layer, we choose $N_{\ell+1}$ equal to the number of classes.
\section{Additional Information Introduced by Node-level Scattering Features}\label{sect_theory}
Before empirically verifying the viability of the proposed architecture in node classification tasks, we first discuss and demonstrate the additional information provided by scattering channels beyond that provided by traditional GCN channels. We first consider information carried by node features, treated as graph signals, and in particular their regularity over the graph. As discussed in Sec.~\ref{sect_graph conv. net}, such regularity is traditionally considered only via smoothness of signals over the graph, as only low frequencies are retained by (local) smoothing operations.
Band-pass filtering, on the other hand, can retain other forms of regularity such as periodic or harmonic patterns. The following lemma demonstrates this difference between GCN and scattering channels.

\begin{lem}\label{thm_Lemma 1}
Consider a cyclic graph on $2n$ nodes, $n\in\mathbb{N}$, and let $\boldsymbol{x}\in\mathbb{R}^{2n}$ be a 2-periodic signal on it (i.e., $\boldsymbol{x}_{2\ell-1}=a$ and $\boldsymbol{x}_{2\ell}=b$, for $\ell\in[n]$ for some $a \neq b \in \mathbb{R}$). Then, for any $\theta \in \mathbb{R}$, the GCN filtering $\boldsymbol{g}_\theta \star \boldsymbol{x}$ from Eq.~\ref{eq_conv. parametrization} yields a constant signal, while the scattering filter $\boldsymbol{\Psi}_0 \boldsymbol{x}$ from Eq.~\ref{eq_wavelet matrix} still produces a 2-periodic signal. Further, this result extends to any finite linear cascade of such filters (i.e., $\boldsymbol{g}_\theta \star \cdots \star \boldsymbol{g}_\theta \star \boldsymbol{x}$ or $\boldsymbol{\Psi}_0 \cdots \boldsymbol{\Psi}_0 \boldsymbol{x}$ with $k\in\mathbb{N}$ filter applications in each).
\end{lem}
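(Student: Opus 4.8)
The plan is to work entirely in the eigenbasis of the adjacency matrix $\boldsymbol{W}$, exploiting that on the cycle $C_{2n}$ every node has degree $2$, so $\boldsymbol{D}=2\Id_{2n}$. This collapses all the normalizations at once: $\boldsymbol{D}^{-1/2}\boldsymbol{W}\boldsymbol{D}^{-1/2}=\boldsymbol{W}\boldsymbol{D}^{-1}=\tfrac12\boldsymbol{W}$, so the GCN filter of Eq.~\ref{eq_conv. parametrization} becomes $\boldsymbol{g}_\theta\star\boldsymbol{x}=\theta(\Id_{2n}+\tfrac12\boldsymbol{W})\boldsymbol{x}$, while $\boldsymbol{P}=\tfrac12\Id_{2n}+\tfrac14\boldsymbol{W}$ and hence $\boldsymbol{\Psi}_0=\Id_{2n}-\boldsymbol{P}=\tfrac12\Id_{2n}-\tfrac14\boldsymbol{W}$. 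The crucial structural point is that both filters are \emph{polynomials in $\boldsymbol{W}$ alone}, so they are simultaneously diagonalized by the eigenvectors of $\boldsymbol{W}$ and act on each eigenvector simply by a scalar.

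First I would identify the two eigenvectors relevant to a $2$-periodic signal. The all-ones vector $\ones_{2n}$ satisfies $\boldsymbol{W}\ones_{2n}=2\ones_{2n}$ (the DC / zero-frequency mode), and the alternating vector $\boldsymbol{u}$ with $\boldsymbol{u}_i=(-1)^{i+1}$ satisfies $\boldsymbol{W}\boldsymbol{u}=-2\boldsymbol{u}$ (the Nyquist / highest-frequency mode); the latter is a genuine eigenvector precisely because the cycle length $2n$ is even, so the alternation closes up consistently around the loop. A $2$-periodic signal lies in their span: $\boldsymbol{x}=\tfrac{a+b}{2}\ones_{2n}+\tfrac{a-b}{2}\boldsymbol{u}$, with the second coefficient nonzero since $a\neq b$.

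Then the conclusions drop out by reading off eigenvalues. On $\ones_{2n}$ the GCN filter has eigenvalue $\theta(1+\tfrac12\cdot2)=2\theta$ and $\boldsymbol{\Psi}_0$ has eigenvalue $\tfrac12-\tfrac14\cdot2=0$; on $\boldsymbol{u}$ the GCN filter has eigenvalue $\theta(1-\tfrac12\cdot2)=0$ and $\boldsymbol{\Psi}_0$ has eigenvalue $\tfrac12+\tfrac14\cdot2=1$. Hence $\boldsymbol{g}_\theta\star\boldsymbol{x}=\theta(a+b)\ones_{2n}$ is constant, whereas $\boldsymbol{\Psi}_0\boldsymbol{x}=\tfrac{a-b}{2}\boldsymbol{u}$ is still $2$-periodic (and nonzero). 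For the cascade I would simply raise eigenvalues to the $k$-th power: the $k$-fold GCN cascade sends $\boldsymbol{x}$ to $(2\theta)^k\tfrac{a+b}{2}\ones_{2n}$ (still constant, since the Nyquist component is annihilated by $0^k=0$), while $\boldsymbol{\Psi}_0^k\boldsymbol{x}=\tfrac{a-b}{2}\boldsymbol{u}$ for every $k$, as the Nyquist eigenvalue is exactly $1$ and the DC component is killed. The two filters therefore project the signal onto complementary one-dimensional components.

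There is no real obstacle here beyond bookkeeping: the only points requiring care are (i) checking the two eigenvalue computations and (ii) noting explicitly that the $-2$ eigenvector exists only because $2n$ is even (on an odd cycle the $2$-periodic pattern is not even well defined, which is why the statement fixes an even number of nodes). As an elementary alternative to the spectral argument one can verify the base case by direct neighbor averaging — at any node both neighbors carry the opposite value, so $(\boldsymbol{g}_\theta\star\boldsymbol{x})_i=\theta(a+b)$ regardless of parity, and $(\boldsymbol{\Psi}_0\boldsymbol{x})_i=\pm\tfrac12(a-b)$ — but the spectral viewpoint is what makes the cascade immediate.
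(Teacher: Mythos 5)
Your proof is correct, and it takes a genuinely different route from the paper's. The paper does not prove Lemma~1 directly: it observes that the even cycle is a bipartite graph of constant degree $\beta=2$ and defers to the proof of Lemma~2, which is a purely spatial-domain computation --- it reduces $\boldsymbol{g}_\theta$ to $2\theta\boldsymbol{P}$ (and the $k$-fold cascade to $2^k\theta^k\boldsymbol{P}^k$), shows by averaging over the $\beta$ neighbors that $(\boldsymbol{P}\boldsymbol{x})[v]=\tfrac{a+b}{2}$ for every $v$, and then reads off $\boldsymbol{\Psi}_0\boldsymbol{x}=\boldsymbol{x}-\tfrac{a+b}{2}\ones_n$ as a shifted $2$-coloring. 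Your argument instead diagonalizes both filters in the eigenbasis of $\boldsymbol{W}$, splitting the signal into its DC component $\ones_{2n}$ (eigenvalue $2$) and Nyquist component $\boldsymbol{u}$ (eigenvalue $-2$, which exists precisely because the cycle is even) and reading off that the GCN filter annihilates the Nyquist mode while $\boldsymbol{\Psi}_0$ annihilates the DC mode. All of your eigenvalue computations check out, and your answers agree with the paper's ($\theta(a+b)\ones_{2n}$ and $\tfrac{a-b}{2}\boldsymbol{u}$). What the spectral viewpoint buys you is that the cascade is immediate (eigenvalues raised to the $k$-th power, with the $\boldsymbol{\Psi}_0$-eigenvalue on $\boldsymbol{u}$ being exactly $1$, so $\boldsymbol{\Psi}_0^k\boldsymbol{x}$ is literally unchanged for all $k\ge 1$) and that the low-pass/band-pass dichotomy the lemma is meant to illustrate is exhibited explicitly as complementary spectral projections; the paper's cascade argument for the scattering branch is comparatively terse. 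What the paper's spatial argument buys is that it transfers verbatim to the more general Lemma~2 on constant-degree bipartite graphs --- though your approach also extends there with no extra work, since on a $\beta$-regular bipartite graph the $\pm1$ two-coloring indicator is a $(-\beta)$-eigenvector of $\boldsymbol{W}$. Your closing remark about direct neighbor averaging is, in fact, exactly the paper's proof.
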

While this is only a simple example, it already indicates a fundamental difference between the regularity patterns considered in graph convolutions compared to our approach. Indeed, it implies that if a smoothing convolutional filter encounters alternating signals on isolated cyclic substructures within a graph, their node features become indistinguishable, while scattering channels (with appropriate scales, weights and bias terms) will be able to make this distinction. Moreover, this difference can be generalized further beyond cyclic structures to consider features encoding two-coloring information on constant-degree bipartite graphs, as shown in the following lemma. We refer the reader to the supplement for a proof of this lemma, which also covers the previous one as a particular case, as well as numerical examples illustrating the results in these two lemmas.
\begin{lem}\label{thm_Lemma 2}
Consider a bipartite graph on $n\in\mathbb{N}$ nodes with constant node degree $\beta$. Let $\boldsymbol{x}\in\mathbb{R}^{n}$ be a 2-coloring signal (i.e., with one part assigned constant $a$ and the other $b$, for some $a \neq b \in \mathbb{R}$). Then, for any $\theta \in \mathbb{R}$, the GCN filtering $\boldsymbol{g}_\theta \star \boldsymbol{x}$ from Eq.~\ref{eq_conv. parametrization} yields a constant signal, while the scattering filter $\boldsymbol{\Psi}_0 \boldsymbol{x}$ from Eq.~\ref{eq_wavelet matrix} still produces a (non-constant) 2-coloring of the graph. Further, this result extends to any finite linear cascade of such filters (i.e., $\boldsymbol{g}_\theta \star \cdots \star \boldsymbol{g}_\theta \star \boldsymbol{x}$ or $\boldsymbol{\Psi}_0 \cdots \boldsymbol{\Psi}_0 \boldsymbol{x}$ with $k\in\mathbb{N}$ filter applications in each).
\end{lem}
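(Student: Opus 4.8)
The plan is to reduce everything to a two-dimensional spectral computation by exploiting that constant degree collapses all the normalizations. Since every node has degree $\beta$, we have $\boldsymbol{D} = \beta\Id_n$, so $\boldsymbol{D}^{-1/2}\boldsymbol{W}\boldsymbol{D}^{-1/2} = \boldsymbol{W}\boldsymbol{D}^{-1} = \beta^{-1}\boldsymbol{W}$ and this scalar factor commutes with every operator involved. Consequently both the GCN filter $\theta(\Id_n + \beta^{-1}\boldsymbol{W})$ and the lazy walk $\boldsymbol{P} = \tfrac12(\Id_n + \beta^{-1}\boldsymbol{W})$ become polynomials in the single matrix $\boldsymbol{W}$, and the entire claim turns into a statement about how $\boldsymbol{W}$ acts on a convenient invariant subspace.

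First I would identify that subspace. Writing $V = V_1 \sqcup V_2$ for the bipartition, let $\boldsymbol{c}\in\mathbb{R}^n$ denote the signed two-coloring, $\boldsymbol{c}_i = +1$ on $V_1$ and $\boldsymbol{c}_i = -1$ on $V_2$. Every two-coloring signal lies in $\operatorname{span}\{\ones_n,\boldsymbol{c}\}$; in particular the given $\boldsymbol{x}$ decomposes as $\boldsymbol{x} = \tfrac{a+b}{2}\ones_n + \tfrac{a-b}{2}\boldsymbol{c}$. The key computation is that both spanning vectors are eigenvectors of $\boldsymbol{W}$: since row sums equal the (weighted) degrees, $\boldsymbol{W}\ones_n = \beta\ones_n$, while bipartiteness forces every neighbor of a $V_1$-node to lie in $V_2$ and conversely, so the weighted neighbor sum of $\boldsymbol{c}$ returns $-\beta\boldsymbol{c}$, i.e.\ $\boldsymbol{W}\boldsymbol{c} = -\beta\boldsymbol{c}$. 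Hence $\operatorname{span}\{\ones_n,\boldsymbol{c}\}$ is invariant under $\boldsymbol{W}$, and therefore under all the filters, which act diagonally in this basis.

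Next I would read off each filter's eigenvalues on $\{\ones_n,\boldsymbol{c}\}$. The GCN filter has eigenvalue $2\theta$ on $\ones_n$ and $0$ on $\boldsymbol{c}$, so it annihilates the $\boldsymbol{c}$-component and yields $\boldsymbol{g}_\theta\star\boldsymbol{x} = \theta(a+b)\ones_n$, a constant. The lazy walk $\boldsymbol{P}$ has eigenvalues $1$ on $\ones_n$ and $0$ on $\boldsymbol{c}$, hence $\boldsymbol{\Psi}_0 = \Id_n - \boldsymbol{P}$ has eigenvalue $0$ on $\ones_n$ and $1$ on $\boldsymbol{c}$; thus $\boldsymbol{\Psi}_0\boldsymbol{x} = \tfrac{a-b}{2}\boldsymbol{c}$, a non-constant two-coloring because $a\neq b$. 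The cascade statements then follow immediately from the diagonal action: $k$ applications of the GCN filter give $(2\theta)^k\tfrac{a+b}{2}\ones_n$ (still constant), whereas $k$ applications of $\boldsymbol{\Psi}_0$ give $\tfrac{a-b}{2}\boldsymbol{c}$ for every $k\geq 1$ (still a non-constant two-coloring, as the eigenvalue $1$ is fixed under powering).

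I expect no genuine obstacle; the single point requiring care is the identity $\boldsymbol{W}\boldsymbol{c} = -\beta\boldsymbol{c}$ in the possibly weighted setting, since this is precisely where bipartiteness, rather than mere $\beta$-regularity, is essential: an intra-part edge would spoil the sign cancellation and $\boldsymbol{c}$ would no longer be an eigenvector. Finally I would observe that Lemma~\ref{thm_Lemma 1} is the special case $\beta = 2$, as the even cycle $C_{2n}$ is a $2$-regular bipartite graph and its $2$-periodic signal is exactly a two-coloring, so the present argument recovers it verbatim.
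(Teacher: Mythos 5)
Your proof is correct, and the underlying mechanism is the same one the paper uses: constant degree collapses $\boldsymbol{D}^{-1/2}\boldsymbol{W}\boldsymbol{D}^{-1/2}$ to $\beta^{-1}\boldsymbol{W}$, which identifies $\boldsymbol{g}_{1/2}\star\boldsymbol{x}$ with $\boldsymbol{P}\boldsymbol{x}$, and bipartiteness forces the neighbor average of a 2-coloring to be $\tfrac{a+b}{2}$. The packaging, however, differs. The paper argues node-wise: it expands $(\boldsymbol{P}\boldsymbol{x})[v]=\tfrac{\boldsymbol{x}[v]}{2}+\sum_{u\in\mathcal{N}(v)}\tfrac{\boldsymbol{x}[u]}{2\beta}=\tfrac{a+b}{2}$ at an arbitrary node, handles the GCN cascade via stochasticity of $\boldsymbol{P}$ (constants are fixed points of $\boldsymbol{P}$), and treats the scattering side by writing $\boldsymbol{\Psi}_0\boldsymbol{x}=\boldsymbol{x}-\tfrac{a+b}{2}\ones_n$, leaving the iteration of $\boldsymbol{\Psi}_0$ essentially implicit. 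You instead diagonalize every filter on the invariant plane $\operatorname{span}\{\ones_n,\boldsymbol{c}\}$ via $\boldsymbol{W}\ones_n=\beta\ones_n$ and $\boldsymbol{W}\boldsymbol{c}=-\beta\boldsymbol{c}$. This buys two things: the cascade claims become automatic, since the eigenvalues $2\theta$ on $\ones_n$ and $1$ on $\boldsymbol{c}$ simply get powered, giving $\boldsymbol{\Psi}_0^k\boldsymbol{x}=\tfrac{a-b}{2}\boldsymbol{c}$ for every $k\geq 1$ with no further argument; and the weighted setting is handled cleanly, whereas the paper's computation tacitly assumes unit edge weights when it sets $\boldsymbol{P}[v,u]=\tfrac{1}{2\beta}$ and counts $\vert\mathcal{N}(v)\vert=\beta$. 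Your closing remark that Lemma~\ref{thm_Lemma 1} is the $\beta=2$ special case is exactly how the paper derives it.
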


Beyond the information encoded in node features, graph wavelets encode geometric information even when it is not carried by input signals. Such a property has already been established, e.g., in the context of community detection, where white noise signals can be used in conjunction with graph wavelets to cluster nodes and reveal faithful community structures~\citep{roddenberry2020exact}.
To demonstrate a similar property in the context of GCN and scattering channels, we give an example of a simple graph structure with two cyclic substructures of different sizes (or cycle lengths) that are connected by one bottleneck edge. In this case, it can be verified that even with constant input signals, some geometric information is encoded by its convolution with graph filters as illustrated in Fig.~\ref{fig: toy example} (we refer the reader to the supplement for exact calculation of filter responses). However, as demonstrated in this case, while the information provided by the GCN filter responses $\boldsymbol{g}_\theta \star \boldsymbol{x}$ from Eq.~\ref{eq_conv. parametrization} is not constant, it does not distinguish between the two cyclic structures (and a similar pattern can be verified for $\boldsymbol{Ax}$). Formally, each node $u$ in one cycle is shown to have at least one node $v$ in the other with the same filter response (i.e., $\boldsymbol{g}_\theta \star \boldsymbol{x} (u) = \boldsymbol{g}_\theta \star \boldsymbol{x}(v)$). In contrast, the information extracted by the wavelet filter response $\boldsymbol{\Psi}_3 \boldsymbol{x}$ (used in geometric scattering) distinguishes between cycles and would allow for their separation. We note that this property generalizes to other cycle lengths as discussed in the supplement, but leave more extensive study of geometric information encoding in graph wavelets to future work.

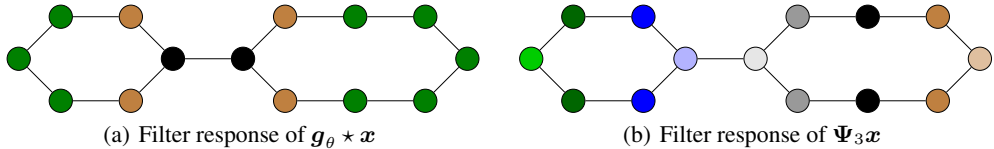
\begin{figure}[hb]
\centering
\subfigure[Filter response of $\boldsymbol{g}_\theta \star \boldsymbol{x}$]{\label{fig_toy example GCN}
\adjustbox{width=0.45\textwidth}{\begin{tikzpicture}
    \node[shape=circle,draw=black, fill=black!50!green] (0) at (0,0) {};
    \node[shape=circle,draw=black, fill=black!50!green] (1) at (0.6,0.6) {};
    \node[shape=circle,draw=black, fill=brown] (2) at (1.6,0.6) {};
    \node[shape=circle,draw=black, fill=black] (3) at (2.2,0) {};
    \node[shape=circle,draw=black, fill=brown] (4) at (1.6,-0.6) {};
    \node[shape=circle,draw=black, fill=black!50!green] (5) at (0.6,-0.6) {};
    
    \node[shape=circle,draw=black, fill=black] (6) at (3.2,0) {};
    \node[shape=circle,draw=black, fill=brown] (7) at (3.8,0.6) {};
    \node[shape=circle,draw=black, fill=black!50!green] (8) at (4.8,0.6) {};
    \node[shape=circle,draw=black, fill=black!50!green] (9) at (5.8,0.6) {};
    \node[shape=circle,draw=black, fill=black!50!green] (10) at (6.4,0) {};
    \node[shape=circle,draw=black, fill=black!50!green] (11) at (5.8,-0.6) {};
    \node[shape=circle,draw=black, fill=black!50!green] (12) at (4.8,-0.6) {};
    \node[shape=circle,draw=black, fill=brown] (13) at (3.8,-0.6) {};

    \path [-] (0) edge (1);
    \path [-] (1) edge (2);
    \path [-] (2) edge (3);
    \path [-] (3) edge (4);
    \path [-] (4) edge (5);
    \path [-] (5) edge (0);
    
    \path [-] (3) edge (6);
    \path [-] (6) edge (7);
    \path [-] (7) edge (8);
    \path [-] (8) edge (9);
    \path [-] (9) edge (10);
    \path [-] (10) edge (11);
    \path [-] (11) edge (12);
    \path [-] (12) edge (13);
    \path [-] (13) edge (6);
\end{tikzpicture}}}
\quad
\subfigure[Filter response of $\boldsymbol{\Psi}_3 \boldsymbol{x}$]{\label{fig_toy example scat}
\adjustbox{width=0.45\textwidth}{\begin{tikzpicture}
    \node[shape=circle,draw=black, fill=black!20!green] (0) at (0,0) {};
    \node[shape=circle,draw=black, fill=black!60!green] (1) at (0.6,0.6) {};
    \node[shape=circle,draw=black, fill=blue] (2) at (1.6,0.6) {};
    \node[shape=circle,draw=black, fill=white!70!blue] (3) at (2.2,0) {};
    \node[shape=circle,draw=black, fill=blue] (4) at (1.6,-0.6) {};
    \node[shape=circle,draw=black, fill=black!60!green] (5) at (0.6,-0.6) {};
    
    \node[shape=circle,draw=black, fill=white!90!black] (6) at (3.2,0) {};
    \node[shape=circle,draw=black, fill=white!60!black] (7) at (3.8,0.6) {};
    \node[shape=circle,draw=black, fill=black] (8) at (4.8,0.6) {};
    \node[shape=circle,draw=black, fill=brown] (9) at (5.8,0.6) {};
    \node[shape=circle,draw=black, fill=white!50!brown] (10) at (6.4,0) {};
    \node[shape=circle,draw=black, fill=brown] (11) at (5.8,-0.6) {};
    \node[shape=circle,draw=black, fill=black] (12) at (4.8,-0.6) {};
    \node[shape=circle,draw=black, fill=white!60!black] (13) at (3.8,-0.6) {};

    \path [-] (0) edge (1);
    \path [-] (1) edge (2);
    \path [-] (2) edge (3);
    \path [-] (3) edge (4);
    \path [-] (4) edge (5);
    \path [-] (5) edge (0);
    
    \path [-] (3) edge (6);
    \path [-] (6) edge (7);
    \path [-] (7) edge (8);
    \path [-] (8) edge (9);
    \path [-] (9) edge (10);
    \path [-] (10) edge (11);
    \path [-] (11) edge (12);
    \path [-] (12) edge (13);
    \path [-] (13) edge (6);
\end{tikzpicture}}}
\caption{Filter responses for \subref{fig_toy example GCN} the GCN filter (Eq.~\ref{eq_conv. parametrization}) and \subref{fig_toy example scat} a scattering filter applied to a constant signal $\boldsymbol{x}$ over a graph with two cyclic substuctures connected by a single-edge bottleneck. Color coding differs slightly between plots, but is consistent within each plot, indicating nodes with numerically indistinguishable response values.
}
\label{fig: toy example}
\end{figure}

\section{Empirical Results}\label{sect_results}
To evaluate our Scattering GCN approach, we compare it to several established methods for semi-supervised node classification, including the original GCN \citep{kipf2016semi}, which is known to be subject to the oversmoothing problem, as discussed in~\citep{li2018deeper}, and Sec.~\ref{sect_introduction} and~\ref{sect_graph conv. net} here. Further, we compare our approach with two recent methods that address the oversmoothing problem. The approach in~\cite{li2018deeper} directly addresses oversmoothing in GCNs by using partially absorbing random walks \citep{wu2012learning} to mitigate rapid mixing of node features in highly connected graph regions.
The graph attention network (GAT)~\citep{velivckovic2017graph} indirectly addresses oversmoothing by training adaptive node-wise weighting of the smoothing operation via an attention mechanism. Furthermore, we also include two alternatives to GCN networks based on Chebyshev polynomial filters~\citep{defferrard2016convolutional} and belief propagation of label information~\citep{zhu2003semi} computed via Gaussian random fields. Finally, we include two baseline approaches to verify the contribution of our hybrid approach compared to compared to the classifier from~\cite{zou2019graph} that is solely based on handcrafted graph-scattering features, and compared to SVM classifier acting directly on node features without considering graph edges, which does not incorporate any geometric information.

The methods from \citep{kipf2016semi,li2018deeper,velivckovic2017graph,defferrard2016convolutional,zhu2003semi} were all executed using the original implementations accompanying their publications. These are tuned and evaluated using the standard splits provided for the benchmark datasets for fair comparison. We ensure that the reported classification accuracies agree with previously published results when available. The tuning of our method (including hyperparameters and composition of GCN and scattering channels) on each dataset was done via grid search (over a fixed set of choices for all datasets) using the same cross validation setup used to tune competing methods. For further details, we refer the reader to the supplement, which contains an ablation study evaluating the importance of each component in our proposed architecture.

\begin{wraptable}[10]{r}{0.65\textwidth}
\vspace{-10pt}
\caption{Dataset characteristics: number of nodes, edges, and features; mean $\pm$ std.\ of node degrees; ratio of \#edges to \#nodes.}
\label{table:dataset}
\centering
\begin{tabular}{cccccccc}
\toprule
Dataset & Nodes & Edges& Features & Degrees  & $\frac{\text{Edges}}{\text{Nodes}}$\\
\midrule
Citeseer & 3,327 & 4,732 & 3,703 & 3.77$\pm$3.38 & 1.42 \\
Cora & 2,708 & 5,429 & 1,433 & 4.90$\pm$5.22 & 2.00 \\
Pubmed & 19,717 & 44,338 & 500 & 5.50$\pm$7.43 & 2.25 \\
DBLP & 17,716 & 52,867 & 1639 & 6.97$\pm$9.35 & 2.98 \\
\bottomrule
\end{tabular}
\end{wraptable}

Our comparisons are based on four popular graph datasets with varying sizes and connectivity structures summarized in Tab.~\ref{table:dataset} (see, e.g.,~\cite{yang2016revisiting} for Citeseer, Cora, and Pubmed, and~\cite{pan2016tri} for DBLP).
We order the datasets by increasing connectivity structure, reflected by their node degrees and edges-to-nodes ratios. As discussed in~\citep{li2018deeper}, increased connectivity leads to faster mixing of node features in GCN, exacerbating the oversmoothing problem (as nodes quickly become indistinguishable) and degrading classification performance. Therefore, we expect the impact of scattering channels and the relative improvement achieved by Scattering GCN to correspond to the increasing connectivity order of datasets in Tab.~\ref{table:dataset}, which is maintained for our reported results in Tab.~\ref{tab_test accuracies} and Fig.~\ref{fig:results}. 

\begin{wraptable}[17]{r}{0.68\textwidth}
\vspace{-12pt}
\caption{Classification accuracy (top two marked in bold; best one underlined) of Scattering GCN on four benchmark datasets compared to four other GNNs~\cite{velivckovic2017graph,li2018deeper,kipf2016semi,defferrard2016convolutional}, a non-GNN approach \cite{zhu2003semi} based on belief propagation, a pure graph scattering baseline~\cite{zou2019graph}, and a nongeometric baseline only using node features with linear SVM.}
\label{tab_test accuracies}
\centering
\begin{tabular}{lcccr}
\toprule
Model & Citeseer & Cora & Pubmed & DBLP \\
\midrule
Scattering GCN (ours) & \textbf{71.7} & \underline{\textbf{84.2}} & \underline{\textbf{79.4}}& \underline{\textbf{81.5}} \\
GAT~\cite{velivckovic2017graph} & \underline{\textbf{72.5}} & \textbf{83.0} & 79.0 & 66.1 \\
Partially absorbing~\cite{li2018deeper} & 71.2 & 81.7 & \textbf{79.2} & 56.9 \\
GCN~\cite{kipf2016semi} & 70.3 & 81.5 & 79.0 & 59.3\\
Chebyshev~\cite{defferrard2016convolutional} & 69.8 & 78.1 & 74.4 & 57.3 \\
Label Propagation~\cite{zhu2003semi} & 58.2 & 77.3 & 71.0 & 53.0 \\
\midrule
Graph scattering~\cite{zou2019graph} &  67.5 & 81.9 & 69.8 & \textbf{69.4} \\
\midrule
Node features (SVM) &  61.1 & 58.0 & 49.9 & 48.2 \\

\bottomrule
\end{tabular}
\end{wraptable}

We first consider test classification accuracy reported in Tab.~\ref{tab_test accuracies}, which shows that our approach outperforms other methods on three out of the four considered datasets. On the remaining one (namely Citeseer) we are only outperformed by GAT. However, we note that this dataset has the weakest connectivity structure (see Tab.~\ref{table:dataset}) and the most informative node features (e.g., achieving 61.1\% accuracy via linear SVM without considering any graph information). In contrast, on DBLP, which has the richest connectivity structure and least informative features (only 48.2\% SVM accuracy), we significantly outperform GAT (over 15\% improvement), which itself significantly outperforms all other methods (by 6.8\% or more) except for the graph scattering baseline from~\cite{zou2019graph}.

\begin{figure}
\centering
\subfigure[Citeseer]{\label{fig:cite_size}
\begin{minipage}{0.232\textwidth}
\includegraphics[width=\textwidth,trim={12pt 12pt 12pt 12pt},clip]{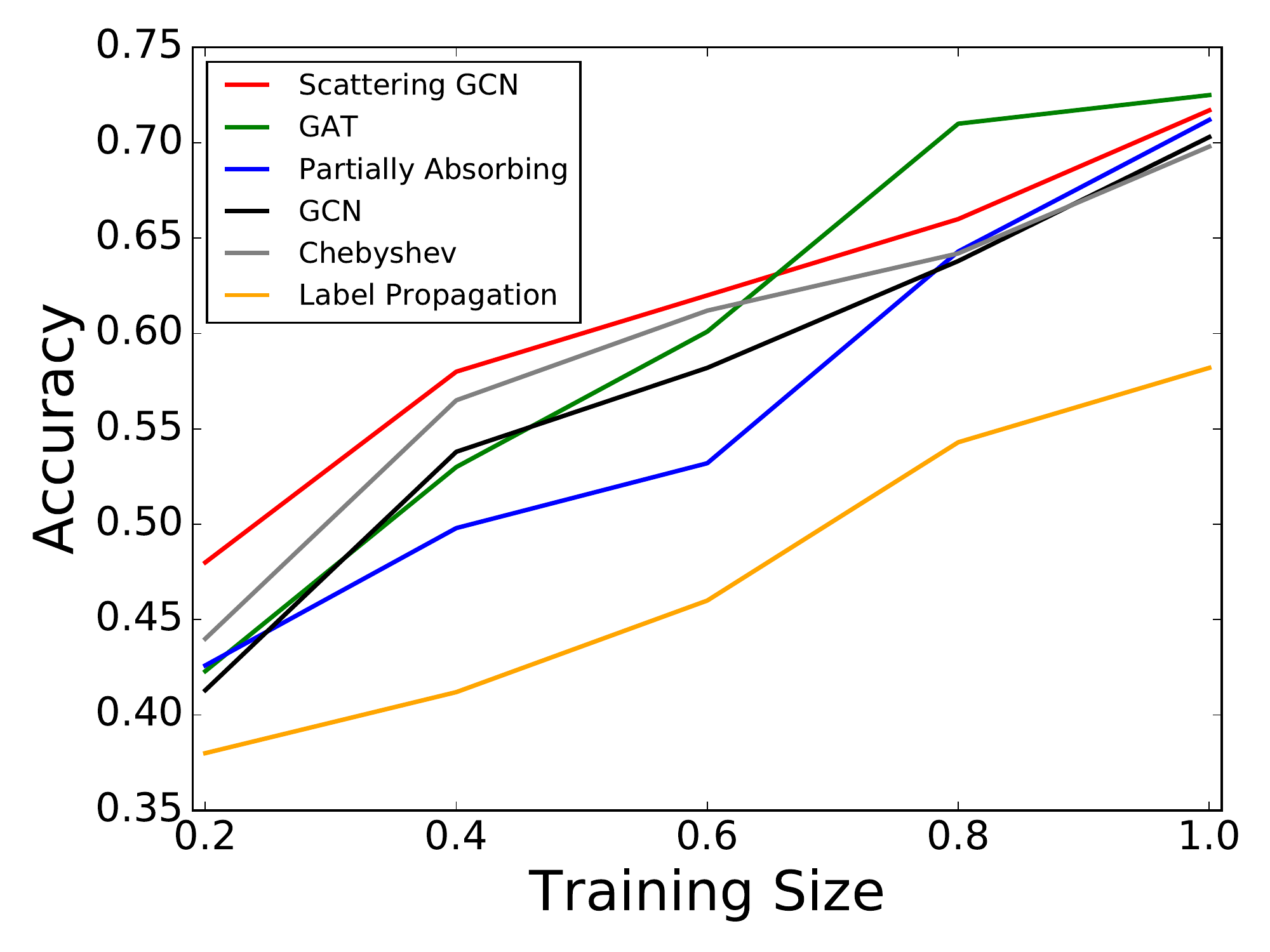}\\[5pt]
\includegraphics[width=\textwidth,trim={10pt 0pt 12pt 10pt},clip]{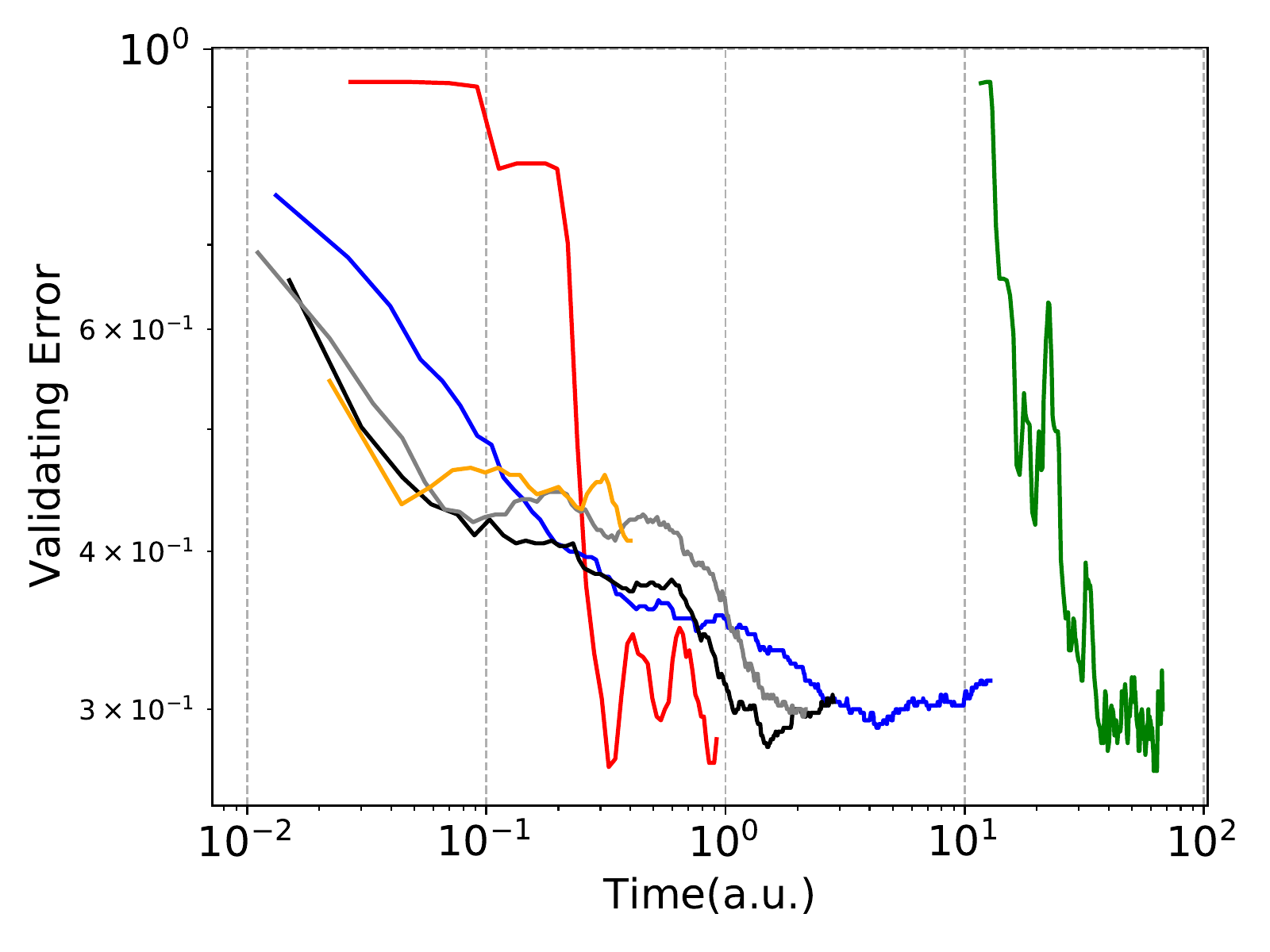}
\end{minipage}
}
\subfigure[Cora]{\label{fig:cora_size}
\begin{minipage}{0.232\textwidth}
\includegraphics[width=\textwidth,trim={12pt 12pt 12pt 12pt},clip]{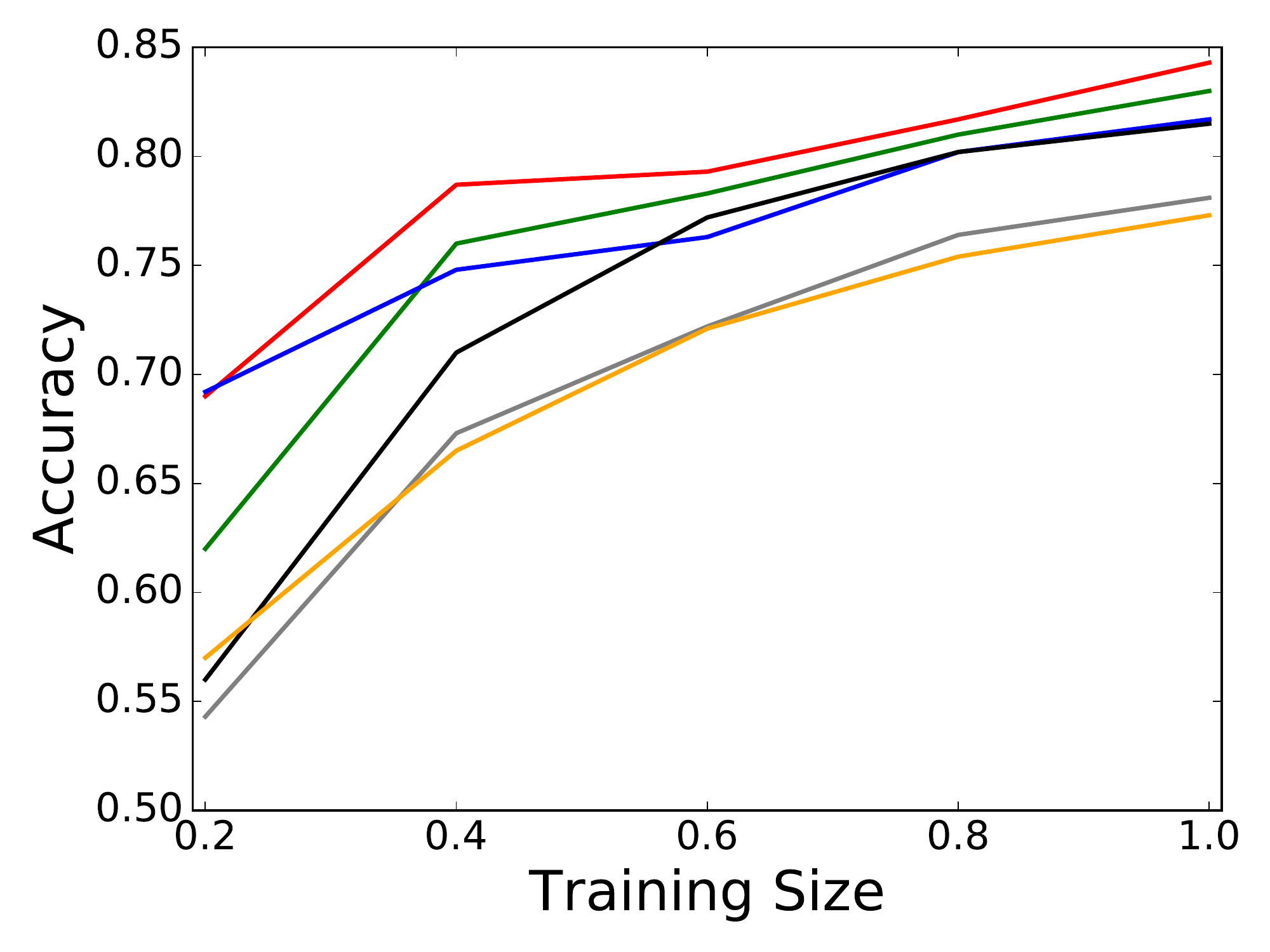}\\[5pt]
\includegraphics[width=\textwidth,trim={10pt 0pt 12pt 10pt},clip]{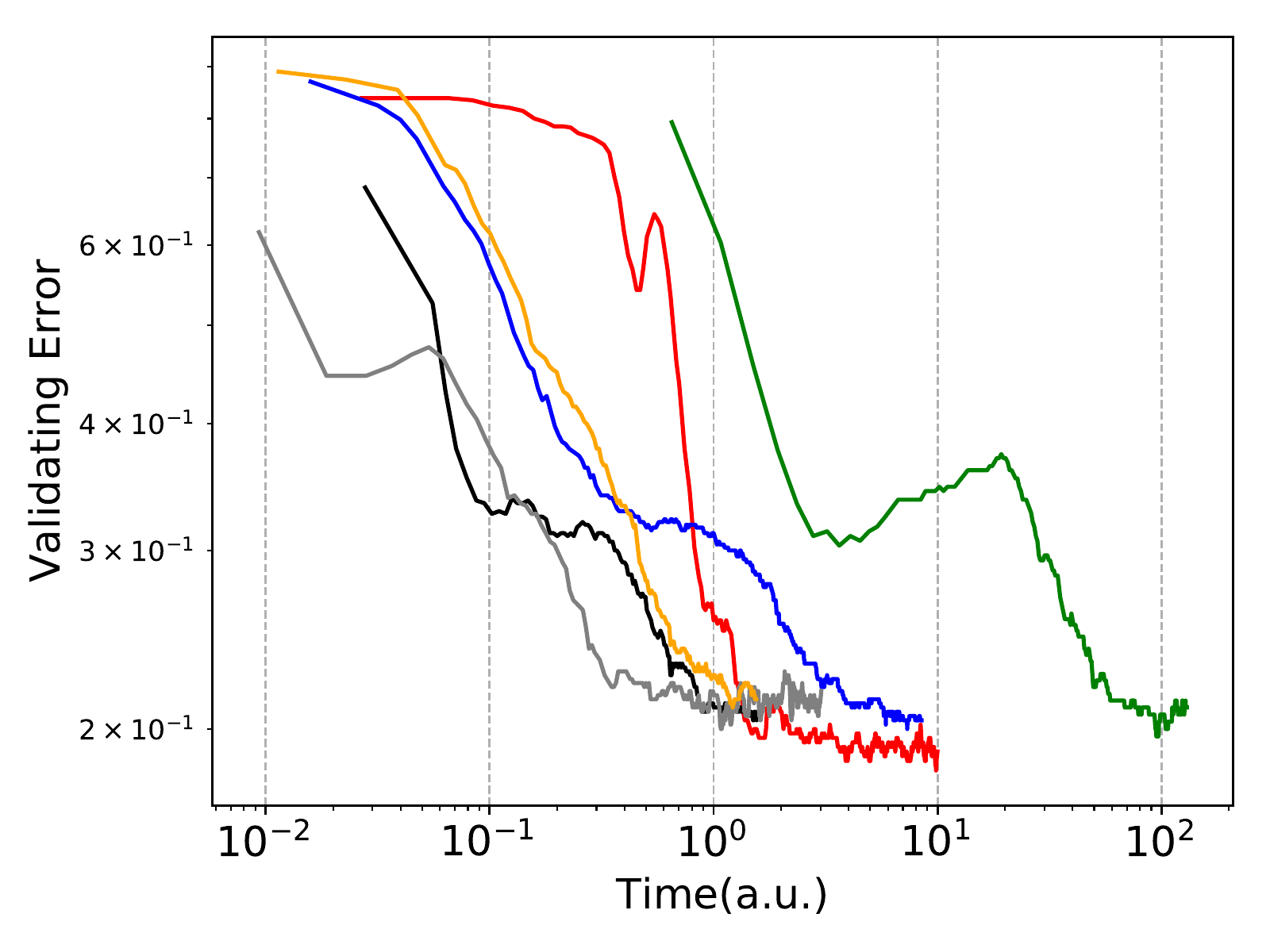}
\end{minipage}
}
\subfigure[Pubmed]{\label{fig:pubmed_size}
\begin{minipage}{0.232\textwidth}
\includegraphics[width=\textwidth,trim={12pt 12pt 12pt 12pt},clip]{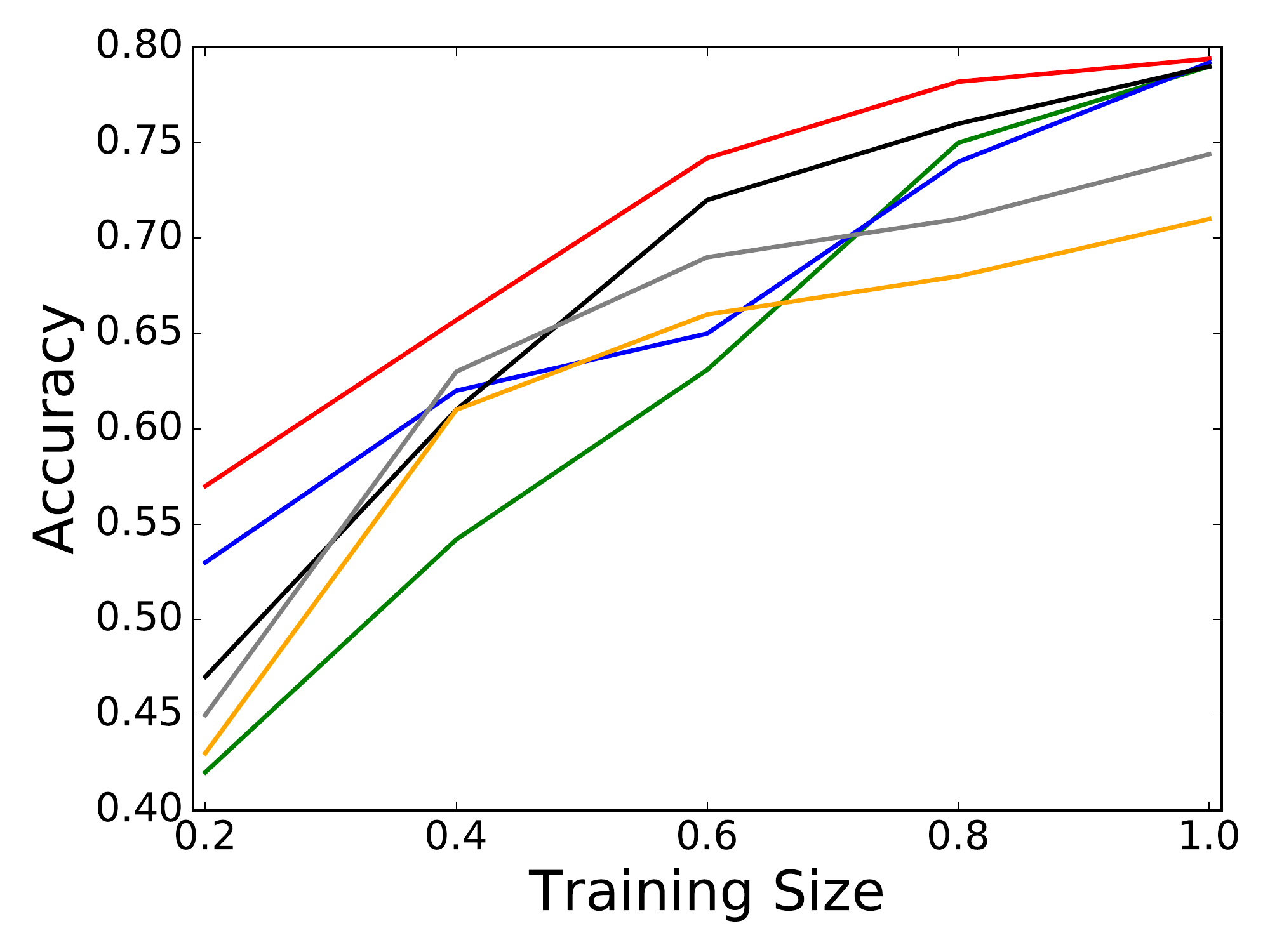}\\[5pt]
\includegraphics[width=\textwidth,trim={10pt 0pt 12pt 10pt},clip]{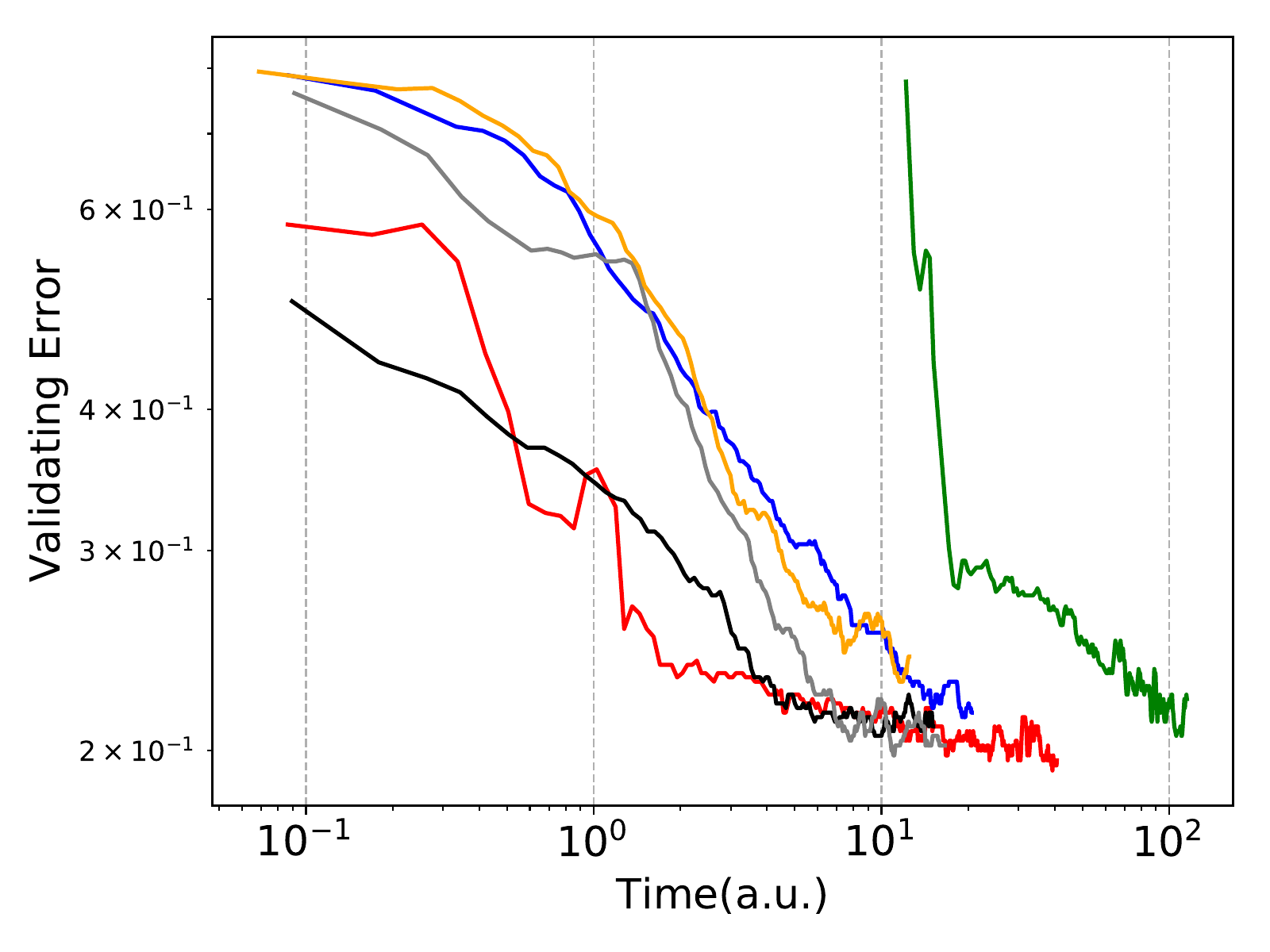}
\end{minipage}
}
\subfigure[DBLP]{\label{fig:dblp_size}
\begin{minipage}{0.232\textwidth}
\includegraphics[width=\textwidth,trim={12pt 12pt 12pt 12pt},clip]{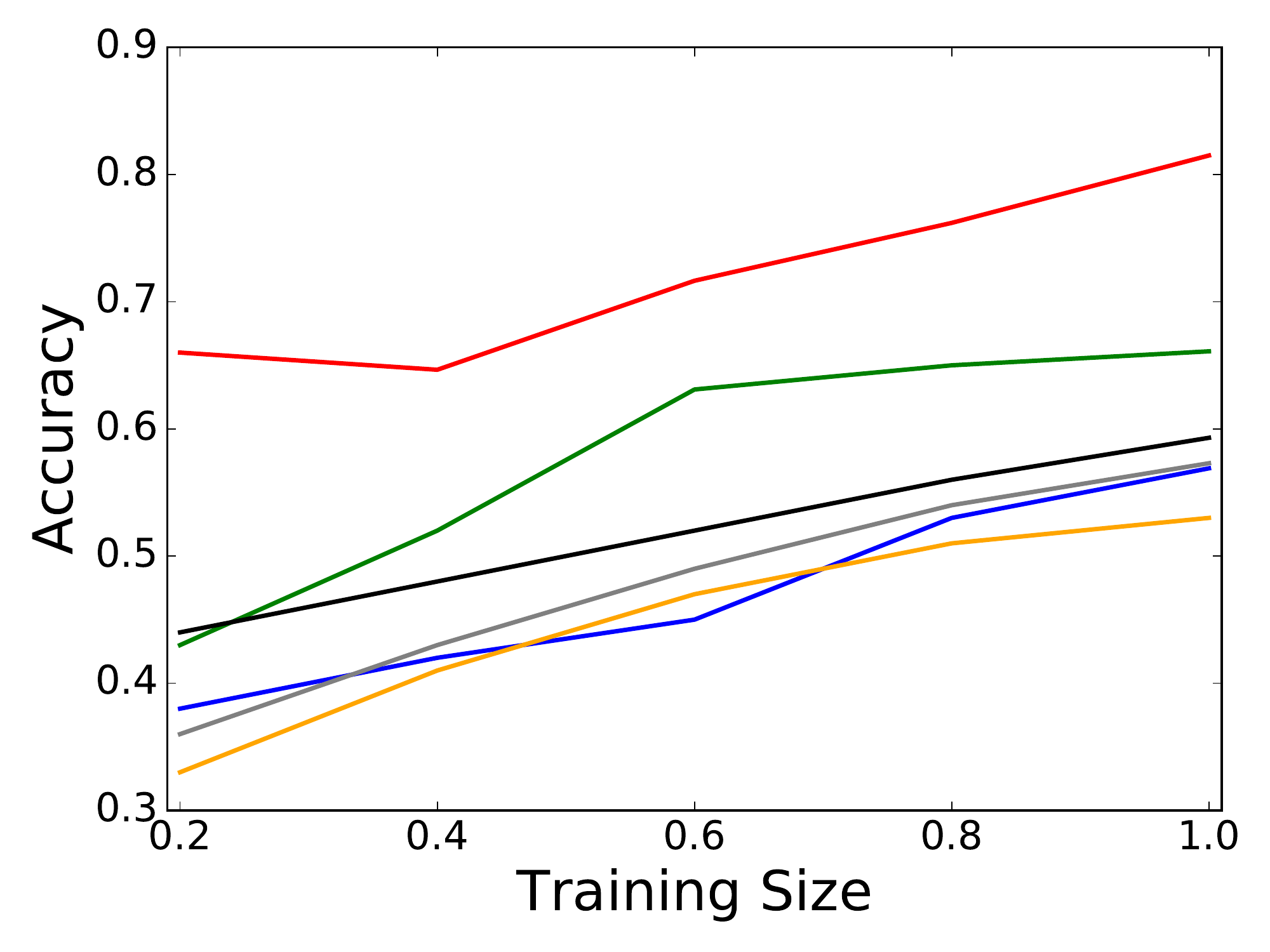}\\[5pt]
\includegraphics[width=\textwidth,trim={10pt 0pt 12pt 10pt},clip]{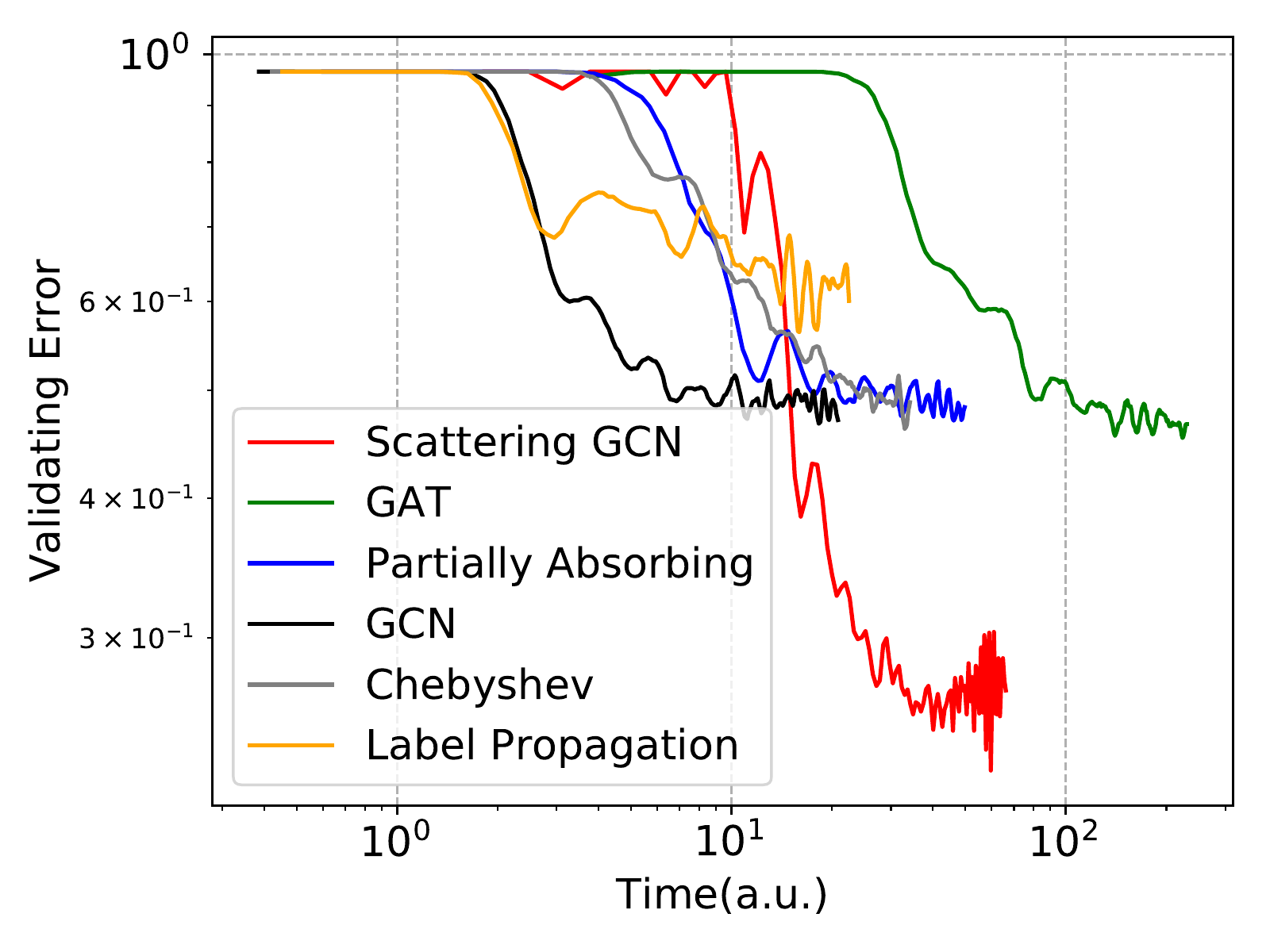}
\end{minipage}
}
\caption{Impact of training set size (top) and training time (bottom) on classification accuracy and error (correspondingly); training size measured relative to the original training size of each dataset; training time and validation error plotted in logarithmic scale; runtime measured for all methods on the same hardware, using original implementations accompanying their publications.}
\label{fig:results}
\end{figure}

Next, we consider the impact of training size on classification performance as we are interested in semi-supervised settings where only a small portion of nodes in the graph are labelled. Fig.~\ref{fig:results} (top) presents the classification accuracy (on validation set) for the training size reduced to 20\%, 40\%, 60\%, 80\% and 100\% of the original training size available for each dataset. These results indicate that Scattering GCN generally exhibits greater stability to sparse training conditions compared to other methods. Importantly, we note that on Citeseer, while GAT outperforms our method for the original training size, its performance degrades rapidly when training size is reduced below 60\% of the original one, at which point Scattering GCN outperforms all other methods. We also note that on Pubmed, even a small decrease in training size (e.g., 80\% of original) creates a significant performance gap between Scattering GCN and GAT, which we believe is due to node features being less independently informative in this case (see baseline in Tab.~\ref{tab_test accuracies}) compared to Citeseer and Cora. 

Finally, in Fig.~\ref{fig:results} (bottom), we consider the evolution of (validation) classification error during the training process. Overall, our results indicate that the training of Scattering GCN reaches low validation errors significantly faster than Partially Absorbing and GAT\footnote{The horizontal shift shown for GAT in Fig.~\ref{fig:results} (bottom), indicating increased training runtime (based on the original implementation accompanying~\cite{velivckovic2017graph}), could be explained by its optimization process requiring more weights than other methods and an intensive gradient computations driven not only graph nodes, but also by graph edges considered in the multihead attention mechanism.}, which are the two other leading methods (in terms of final test accuracy in Tab.~\ref{tab_test accuracies}). On Pubmed, which is the largest dataset considered here (by number of nodes), our error decays at a similar rate to that of GCN, showing a notable gap over all other methods. On DBLP, which has a similar number of nodes but significantly more edges, Scattering GCN takes longer to converge (compared to GCN), but as discussed before, it also demonstrates a significant (double-digit) performance lead compared to all other methods.

\section{Conclusion}\label{sect_conclusion}
Our study of semi-supervised node-level classification tasks for graphs presents a new approach to address some of the main concerns and limitations of GCN models. We discuss and consider richer notions of regularity on graphs to expand the GCN approach, which solely relies on enforcing smoothness over graph neighborhoods. This is achieved by incorporating multiple frequency bands of graph signals, which are typically not leveraged in traditional GCN models. Our construction is inspired by geometric scattering, which has mainly been used for whole-graph classification so far. Our results demonstrate several benefits of incorporating the elements presented here (i.e., scattering channels and residual convolutions) into GCN architectures. Furthermore, we expect the incorporation of these elements together in more intricate architectures to provide new capabilities of pattern recognition and local information extraction in graphs. For example, attention mechanisms could be used to adaptively tune scattering configurations at the resolution of each node, rather than the global graph level used here. We leave the exploration of such research avenues for future work.

\section*{Broader Impact}

Node classification in graphs is an important task that gains increasing interest nowadays in multiple fields looking into network analysis applications. For example, they are of interest in social studies, where a natural application is the study of social networks and other interaction graphs. Other popular application fields include biochemistry and epidemiology. However, this work is computational in nature and addresses the foundations of graph processing and geometric deep learning. As such, by itself, it is not expected to raise ethical concerns nor to have adverse effects on society.

\begin{ack}
The authors would like to thank Dongmian Zou for fruitful discussions. This work was partially funded by IVADO Professor startup \& operational funds, IVADO Fundamental Research Project grant PRF-2019-3583139727, and NIH grant R01GM135929. The content provided here is solely the responsibility of the authors and does not necessarily represent the official views of the funding agencies.
\end{ack}

\bibliographystyle{unsrtnat}
\bibliography{references}

\clearpage

\section*{Supplement}
\appendix
\section{Proofs and Illustrative Examples of Lemmas~1 and~2}

\begin{figure}[!h]
    \centering
    \subfigure[Bipartite graph]{\label{subfig_bipartite}
    \adjustbox{width=0.48\textwidth}{
    \includegraphics[width=\linewidth]{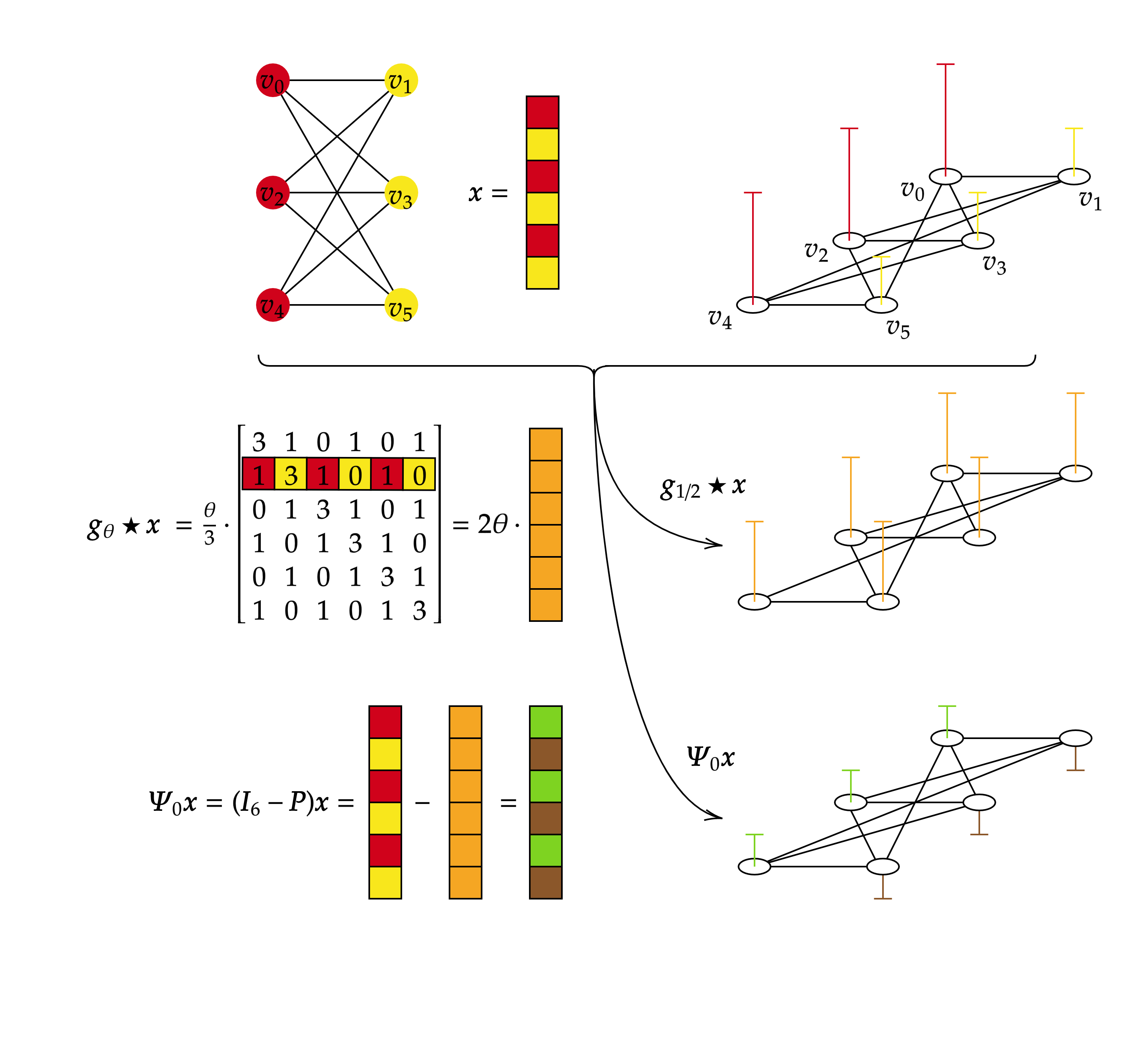}
    }}
    \subfigure[Cyclic graph]{\label{subfig_cyclic}
    \adjustbox{width=0.48\textwidth}{
    \includegraphics[width=\linewidth]{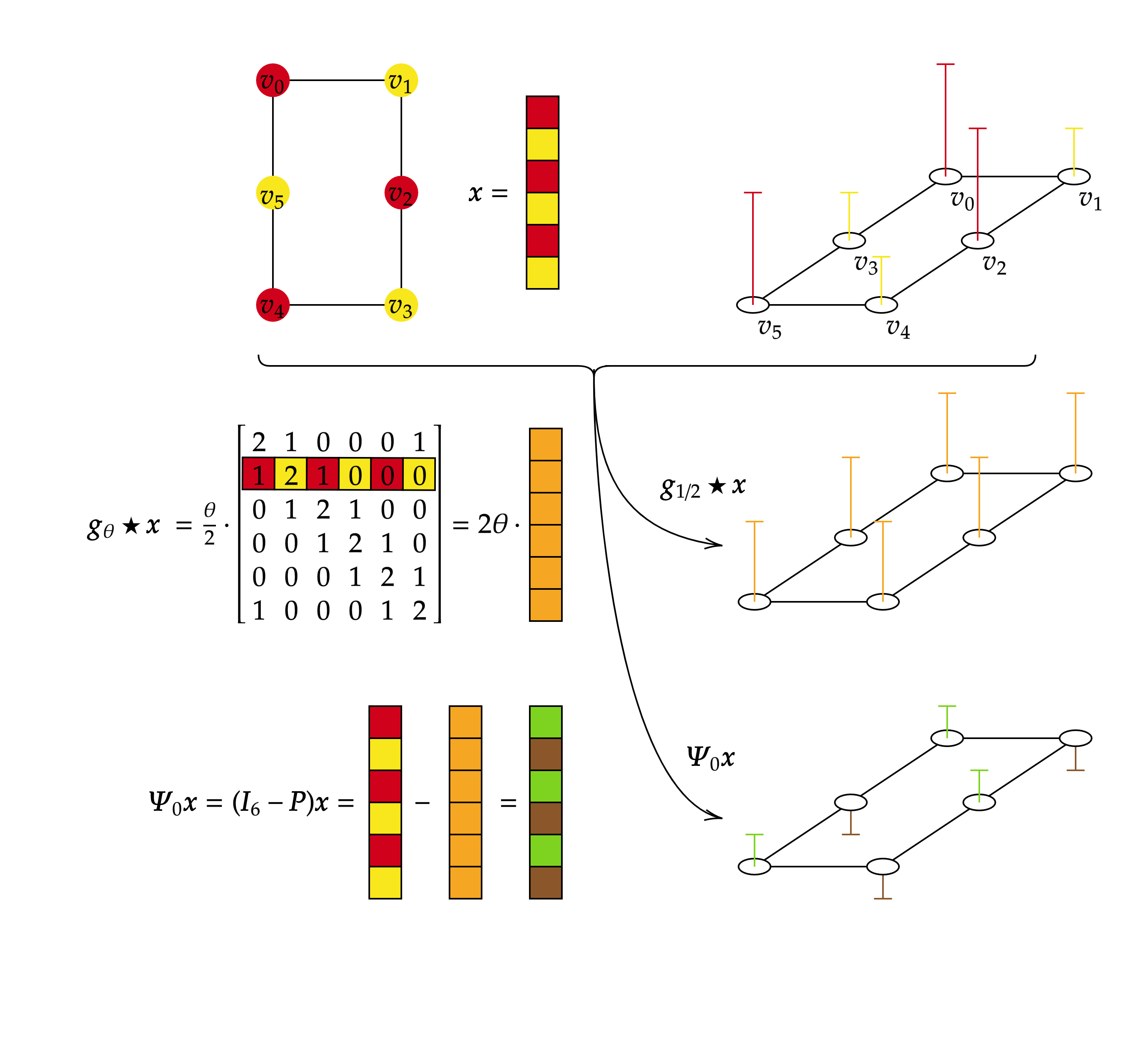}
    }}
    
    \label{fig_theory}
    \caption{Illustrative examples for Lemma 1 and 2 in Sec. 7 of the main paper.}
\end{figure}

\begin{proof}[Proof of Lemma 1]
Note that $G_{cyc}^{2n}$ is a bipartite graph with constant node degree $\beta=2$. Therefore, the proof of Lemma 1 can be seen as special case of Lemma 2, which is proved below.
\end{proof}

\begin{proof}[Proof of Lemma 2]
We first notice that if $\boldsymbol{g}_{\frac{1}{2}} \star \boldsymbol{x}$ is constant, then $\boldsymbol{g}_\theta \star \boldsymbol{x} = 2 \theta (\boldsymbol{g}_{\frac{1}{2}} \star \boldsymbol{x})$ is constant for any $\theta$. Furthermore, for the considered class of graphs, $\boldsymbol{D} = \beta \Id_n$ with $\beta>0$, implying that $\boldsymbol{D}^{-1}=\frac{1}{\beta} \Id_n$ and $\boldsymbol{D}^{-1/2} = \frac{1}{\sqrt{\beta}} \Id_n$. Therefore, as a direct result of Eq.~1 in the main paper, it holds that
\begin{equation}\label{eq_A}
    \boldsymbol{g}_{\frac{1}{2}} \star \boldsymbol{x} = \left(\frac{1}{2} \Id_n + \frac{1}{2\beta}\boldsymbol{W}\right)\boldsymbol{x} = \boldsymbol{P}\boldsymbol{x}.
\end{equation}
Similarly, it is easily verified that any $k \in \mathbb{N}$ applications of the convolution with $\boldsymbol{g}_\theta$ (for any $\theta \in \mathbb{R}$) can be written as $2^k\theta^k \boldsymbol{P}^k \boldsymbol{x}$.
Furthermore, since $\boldsymbol{P}$ is column-stochastic and (here) symmetric (thus also row-stochastic), we have $\boldsymbol{P} \boldsymbol{c} = \boldsymbol{c}$ for any constant signal $\boldsymbol{c} = c \boldsymbol{1}_{2n}$. Thus, it is sufficient to show that $\boldsymbol{P}\boldsymbol{x}$ is a constant signal to verify the first claim of the lemma.

We consider $G_{bi,\beta}^n = (V,E)$. For any node $v\in V$, according to Eq. \ref{eq_A}, we can write
\begin{align*}
    (\boldsymbol{P} \boldsymbol{x})[v]
    = \overbrace{\boldsymbol{P}[v,v]}^{=\frac{1}{2}} \boldsymbol{x}[v] + \sum_{u\in \mathcal{N}(v)} \overbrace{\boldsymbol{P}[v,u]}^{=\frac{1}{2\beta}} \boldsymbol{x}[u] + \sum_{w\in V^{v}} \overbrace{\boldsymbol{P}[v,w]}^{=0} \boldsymbol{x}[w], 
\end{align*}
where we denote by $\mathcal{N}(v)$ the neighborhood of the node $v$ and set $V^{v}\coloneqq V\setminus (\{v\}\cup \mathcal{N}(v))$. This implies that
\begin{equation}\label{eq_Ax[v]}
    (\boldsymbol{P} \boldsymbol{x})[v] = \frac{\boldsymbol{x}[v]}{2} + \sum_{u\in\mathcal{N}(v)} \frac{\boldsymbol{x}[u]}{2\beta}.
\end{equation}

We now consider a 2-coloring signal $\boldsymbol{x}\in\mathbb{R}^n$. W.l.o.g., let $\boldsymbol{x}[v] = a$, which implies $\boldsymbol{x}[u] = b$ for all $u\in\mathcal{N}(v)$.
Now, since $\vert\mathcal{N}(v)\vert=\beta$, it holds
$$
    (\boldsymbol{P} \boldsymbol{x})[v] = \frac{a+b}{2},
$$
thus verifying the first claim of the lemma as the choice of $v$ was arbitrary.
Finally, it is now straightforward to verify the second claim as well, since the operation $\boldsymbol{\Psi}_0 \boldsymbol{x} = (\Id_n - \boldsymbol{P) \boldsymbol{x}} = \boldsymbol{x} - \frac{a+b}{2} \ones_n$ retains a 2-coloring signal (the original colors are shifted by a constant: $-\frac{a+b}{2}$).
\end{proof}


\section{Geometric Information Encoded by Graph Wavelets (supp.\ info.\ Sec.~7)}
Let $\mathcal{G}_{cyc}^{\geq\ell}\coloneqq \{G_{cyc}^k : k\geq \ell\}$, $3\leq\ell\in\mathbb{N}$, be the class of unweighted cyclic graphs of length greater than or equal to $\ell$. 
Furthermore, let $\mathcal{G}_{\ell}^\star$ be the class of graphs constructed by taking $n \geq 2$ cyclic graphs $G_1,\dots,G_n$, arranging them in the order of the indexes, and connecting subsequent cycles with bottleneck edges as described in the following.
\begin{enumerate}
    \item We take $G_1, G_n\in \mathcal{G}_{cyc}^{\geq 4}$ and $G_2,\dots, G_{n-1}\in\mathcal{G}_{cyc}^{\geq\ell}$.
    \item The (sub)graph $G_i$, $1 \leq i \leq n - 1$, is connected by exactly one bottleneck edge to $G_{i+1}$.
    \item No edge is connecting (sub)graphs $G_i$ and $G_j$ if $|i - j| \geq 2$.
    \item For each $G_i$, $2 \leq i \leq n - 1$, there are exactly two nodes in $G_i$ with bottleneck edges coming out of them, and these nodes are the farthest from each other in the cycle \footnote{For cycles of odd length, the choice of the node to connect is ambiguous (as there are two qualifying nodes), but the claim holds for either choice} (in shortest-path distance).
\end{enumerate}
This construction essentially generalizes the graph demonstrated in Fig.~3 of the main paper (see Sec.~7). The following lemma shows that on such graphs, the filter responses of $\boldsymbol{g}_\theta$ for a constant signal will encode some geometric information, but will not distinguish between the cycles in the graph. Note that this result can also be generalized further to a chain that is closed by connecting $G_n$ and $G_1$ with a bottleneck edge if we further assume that $G_1, G_n\in\mathcal{G}_{cyc}^{\geq 7}$.

\begin{lem} \label{lem:supp}
Let $G=(V, E)$ a graph of the class $\mathcal{G}_{7}^\star$. We consider a constant signal $\boldsymbol{c} = c \boldsymbol{1}_{\vert V\vert}$, for some $c \in \mathbb{R}$. Then, for all nodes $v\in V$ and for any $\theta \in \mathbb{R}$, the filter response $(\boldsymbol{g}_\theta \star \boldsymbol{c})[v]$ shares its value with at least one node of each other cyclic substructure.
\end{lem}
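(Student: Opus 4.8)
The plan is to reduce the statement to a purely combinatorial claim about local degree patterns, and then to verify that claim from the structural constraints defining $\mathcal{G}_7^\star$. First I would compute the filter response on a constant signal. Since graphs in $\mathcal{G}_7^\star$ are unweighted, applying Eq.~1 of the main paper to $\boldsymbol{c}$ gives, for each node $v$,
\[
    (\boldsymbol{g}_\theta \star \boldsymbol{c})[v] = \theta\, c \left( 1 + \frac{1}{\sqrt{d_v}} \sum_{u \in \mathcal{N}(v)} \frac{1}{\sqrt{d_u}} \right),
\]
so the response at $v$ is a function only of the \emph{local degree profile} of $v$, i.e.\ the pair consisting of $d_v$ together with the multiset $\{\!\{d_u : u\in\mathcal{N}(v)\}\!\}$. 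Hence two nodes sharing a profile receive identical responses, simultaneously for every $\theta$. This reduces the lemma to the statement that the degree profile of every node is realized in each of the other cyclic substructures.

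Next I would classify the profiles that can occur. Every node has degree $2$ (an ordinary cycle node) or degree $3$ (an endpoint of a bottleneck edge), so the a priori candidates are $(2,\{2,2\})$, $(2,\{2,3\})$, $(2,\{3,3\})$ and $(3,\{2,2,3\})$. The key structural step is to eliminate $(2,\{3,3\})$ and to fix the profile of every bottleneck node as $(3,\{2,2,3\})$; both facts follow once I show that two bottleneck nodes are never adjacent and never share a common neighbor. For an end cycle this is immediate, as it contains a single bottleneck node; for an interior cycle it follows from the length-$\geq 7$ and ``farthest apart'' conditions, which force the two bottleneck nodes to be at shortest-path distance $\geq 3$. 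Consequently the cycle-neighbors of any bottleneck node have degree $2$ and its bottleneck-neighbor has degree $3$, so its profile is exactly $(3,\{2,2,3\})$, and no degree-$2$ node is adjacent to two degree-$3$ nodes.

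I would then show that each cyclic substructure contains a node of each of the three surviving profiles. Every cycle carries at least one bottleneck node, which yields a $(3,\{2,2,3\})$ node and, at its cycle-neighbors, a $(2,\{2,3\})$ node. The crux is exhibiting a $(2,\{2,2\})$ node, i.e.\ an interior node both of whose neighbors have degree $2$: this is precisely where the length bounds enter, and I would check that length $\geq 4$ suffices for the two end cycles (the node opposite the bottleneck node works) and length $\geq 7$ for the interior ones (a node on the longer arc between the two far-apart bottleneck nodes works). Combined with the reduction of the first paragraph, this finishes the argument, since the profile of an arbitrary $v$ is one of the three and therefore appears in every other cycle.

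I expect the structural bookkeeping of the last two steps to be the main obstacle, as this is where the specific constants $4$ and $7$ are genuinely needed rather than decorative. To confirm their tightness I would note two failure modes: an interior cycle of length $5$ (bottleneck nodes at distance $2$) would create a forbidden $(2,\{3,3\})$ node, while one of length $6$ would contain \emph{no} $(2,\{2,2\})$ node at all; in either case some profile fails to occur in every cycle and the conclusion could break. Making these two reductions airtight—particularly the ``no common neighbor / guaranteed degree-$2$-sandwiched node'' dichotomy across end versus interior cycles—is the part of the write-up that needs the most care.
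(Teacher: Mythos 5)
Your proposal is correct and follows essentially the same route as the paper's proof: reduce the filter response on a constant signal to a function of the local degree profile (the paper's ``neighborhood types'' of hub and pass nodes), verify that only three profiles occur, and check that every cyclic substructure realizes all three. Your write-up is in fact somewhat more explicit than the paper's about why the profile $(2,\{3,3\})$ is excluded and why the length bounds $4$ and $7$ are exactly what guarantee a pass node with two pass neighbors in every cycle, but the underlying argument is the same.
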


\begin{proof}
First, note that $V$ contains only the following two kinds of nodes. We refer to a node of degree 3 (those contained in a bottleneck edge) as a \textit{hub}, while using the term \textit{pass} for all other nodes (those of degree 2). Furthermore, due to the minimal cycle length, and the shortest-path distance requirement between hub nodes in the same cycle, only three types of neighborhoods can be encountered. Indeed, it is easy to see that each hub node has one hub neighbor and two pass neighbors, while pass nodes can either have two pass neighbors or one pass and one hub.

        
        


Next, we notice that for any $\theta$ and any $c$, the filter response $(\boldsymbol{g}_{\theta} \star \boldsymbol{c})[v] = c \theta (\boldsymbol{g}_1 \star \boldsymbol{1}_{\vert V \vert})[v]$ is fully determined by the neighborhood type of $v \in V$. 
Therefore, computing these boils down to a simple proof by cases:
\begin{enumerate}
    \item Let $v$ be a hub, then the corresponding response is $(\boldsymbol{g}_{\theta} \star \boldsymbol{c})[v] = c \, \theta \,  (1 + \frac{1}{3} + \frac{1}{\sqrt{6}} + \frac{1}{\sqrt{6}}) \approx 2.150 \cdot c \, \theta$.
    \item Let $v$ be a pass with two pass neighbors, then the corresponding response is $(\boldsymbol{g}_{\theta} \star \boldsymbol{c})[v]= c \, \theta \, (1 + \frac{1}{2} + \frac{1}{2}) = 2 \cdot c \, \theta$.
    \item Let $v$ be a pass with one hub neighbor and one pass neighbor, then the corresponding response is $(\boldsymbol{g}_{\theta} \star \boldsymbol{c})[v]=c \, \theta \, (1 + \frac{1}{\sqrt{6}} + \frac{1}{2}) \approx 1.908 \cdot c \, \theta$.
\end{enumerate} 
Finally, it is trivial to see that every cyclic substructure from the class $\mathcal{G}_7^\star$ contains at least one hub and two passes connected to that hub. The existence of a pass only connected to other passes follows from the choice of the minimal cycle lengths together with the requirement that two hubs within a cycle have maximal distance from each other.
\end{proof}

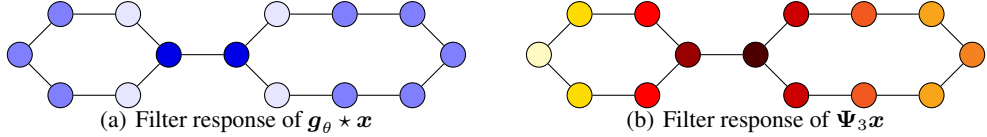
\begin{figure}[htb]
\centering

\subfigure[Filter response of $\boldsymbol{g}_\theta \star \boldsymbol{x}$]{
\label{fig_ex_GCN}
\begin{minipage}{0.45\columnwidth}
\centering
\begin{tikzpicture}[scale=0.9]
    \node[shape=circle,draw=black, fill=blue!50!white] (0) at (0,0) {};
    \node[shape=circle,draw=black, fill=blue!50!white] (1) at (0.6,0.6) {};
    \node[shape=circle,draw=black, fill=blue!10!white] (2) at (1.6,0.6) {};
    \node[shape=circle,draw=black, fill=black!10!blue] (3) at (2.2,0) {};
    \node[shape=circle,draw=black, fill=blue!10!white] (4) at (1.6,-0.6) {};
    \node[shape=circle,draw=black, fill=blue!50!white] (5) at (0.6,-0.6) {};
    
    \node[shape=circle,draw=black, fill=black!10!blue] (6) at (3.2,0) {};
    \node[shape=circle,draw=black, fill=blue!10!white] (7) at (3.8,0.6) {};
    \node[shape=circle,draw=black, fill=blue!50!white] (8) at (4.8,0.6) {};
    \node[shape=circle,draw=black, fill=blue!50!white] (9) at (5.8,0.6) {};
    \node[shape=circle,draw=black, fill=blue!50!white] (10) at (6.4,0) {};
    \node[shape=circle,draw=black, fill=blue!50!white] (11) at (5.8,-0.6) {};
    \node[shape=circle,draw=black, fill=blue!50!white] (12) at (4.8,-0.6) {};
    \node[shape=circle,draw=black, fill=blue!10!white] (13) at (3.8,-0.6) {};

    \path [-] (0) edge (1);
    \path [-] (1) edge (2);
    \path [-] (2) edge (3);
    \path [-] (3) edge (4);
    \path [-] (4) edge (5);
    \path [-] (5) edge (0);
    
    \path [-] (3) edge (6);
    \path [-] (6) edge (7);
    \path [-] (7) edge (8);
    \path [-] (8) edge (9);
    \path [-] (9) edge (10);
    \path [-] (10) edge (11);
    \path [-] (11) edge (12);
    \path [-] (12) edge (13);
    \path [-] (13) edge (6);
\end{tikzpicture}
\end{minipage}
}
\quad
\subfigure[Filter response of $\boldsymbol{\Psi}_3 \boldsymbol{x}$]{
\label{fig_ex_Scat}
\begin{minipage}{0.45\columnwidth}
\centering
\begin{tikzpicture}[scale=0.9]
    \node[shape=circle,draw=black, fill=white!70!yellow] (0) at (0,0) {};
    \node[shape=circle,draw=black, fill=yellow!90!red] (1) at (0.6,0.6) {};
    \node[shape=circle,draw=black, fill=red] (2) at (1.6,0.6) {};
    \node[shape=circle,draw=black, fill=black!40!red] (3) at (2.2,0) {};
    \node[shape=circle,draw=black, fill=red] (4) at (1.6,-0.6) {};
    \node[shape=circle,draw=black, fill=yellow!90!red] (5) at (0.6,-0.6) {};
    
    \node[shape=circle,draw=black, fill=black!70!red] (6) at (3.2,0) {};
    \node[shape=circle,draw=black, fill=black!20!red] (7) at (3.8,0.6) {};
    \node[shape=circle,draw=black, fill=yellow!20!red] (8) at (4.8,0.6) {};
    \node[shape=circle,draw=black, fill=yellow!60!red] (9) at (5.8,0.6) {};
    \node[shape=circle,draw=black, fill=yellow!40!red] (10) at (6.4,0) {};
    \node[shape=circle,draw=black, fill=yellow!60!red] (11) at (5.8,-0.6) {};
    \node[shape=circle,draw=black, fill=yellow!20!red] (12) at (4.8,-0.6) {};
    \node[shape=circle,draw=black, fill=black!20!red] (13) at (3.8,-0.6) {};

    \path [-] (0) edge (1);
    \path [-] (1) edge (2);
    \path [-] (2) edge (3);
    \path [-] (3) edge (4);
    \path [-] (4) edge (5);
    \path [-] (5) edge (0);
    
    \path [-] (3) edge (6);
    \path [-] (6) edge (7);
    \path [-] (7) edge (8);
    \path [-] (8) edge (9);
    \path [-] (9) edge (10);
    \path [-] (10) edge (11);
    \path [-] (11) edge (12);
    \path [-] (12) edge (13);
    \path [-] (13) edge (6);
\end{tikzpicture}
\end{minipage}
}

\caption{Filter responses used in (a) GCN and (b) Scattering
channels when applied to a constant signal $\boldsymbol{x}$ over a
graph with two cyclic substuctures connected by a single edge
bottleneck.}

\label{fig: toy example copy}
\end{figure}

Let us now revisit the example given in the main paper (see Fig.~3 or a copy in Fig.~\ref{fig: toy example copy}). We consider a graph consisting of 14 nodes organized in 2 cycles ($v_1 \sim v_2 \cdots v_6 \sim v_1$ and $v_7 \sim v_8 \cdots v_{14} \sim v_7$ here) of different length (i.e., 6 and 8 here), which are connected with one single edge between any two nodes taken from different cycles ($v_4$ and $v_7$ here). As this is a specific case of Lemma~\ref{lem:supp}, the filter responses of the GCN filter $\boldsymbol{g}_\theta$ for a constant signal $\boldsymbol{x}$ would indeed not distinguish between cycles as discussed in Sec.~7 of the main paper (with the pattern shown in Figs.~3 and~\ref{fig: toy example copy} for $\theta=1$). On the other hand, the filter responses of $\boldsymbol{\Psi}_3 \boldsymbol{x}$ on a constant signal (e.g., $\boldsymbol{x}=\ones_{14}\in \mathbb{R}^{14}$) on this graph can be verified empirically as follows:
%
%
\begin{center}

\begin{tabular}{|r|rrrrrr|}
\multicolumn{1}{r}{~} & \multicolumn{1}{c}{$v_1$} & \multicolumn{1}{c}{$v_2$} & \multicolumn{1}{c}{$v_3$} & \multicolumn{1}{c}{$v_4$} & \multicolumn{1}{c}{$v_5$} & \multicolumn{1}{c}{$v_6$} \\
\hline
$\boldsymbol{\Psi}_3 \boldsymbol{x}$ \scriptsize($10^{-3}\times$)& \scriptsize 33.0 & \scriptsize 20.3 & \scriptsize -10.7 & \scriptsize -52.3 & \scriptsize -10.7 & \scriptsize 20.3 \\
\hline
\end{tabular}

\begin{tabular}{|r|rrrrrrrr|}
\multicolumn{1}{r}{~} & \multicolumn{1}{c}{$v_7$} & \multicolumn{1}{c}{$v_8$} & \multicolumn{1}{c}{$v_9$} & \multicolumn{1}{c}{$v_{10}$} & \multicolumn{1}{c}{$v_{11}$} & \multicolumn{1}{c}{$v_{12}$} & \multicolumn{1}{c}{$v_{13}$} & \multicolumn{1}{c}{$v_{14}$} \\
\hline
$\boldsymbol{\Psi}_3 \boldsymbol{x}$ \scriptsize($10^{-3}\times$) & \scriptsize -53.5 & \scriptsize -13.9 & \scriptsize 11.2 & \scriptsize 19.8 & \scriptsize 19.4 & \scriptsize 19.8 & \scriptsize 11.2 & \scriptsize -13.9 \\
\hline
\end{tabular}

\end{center}
These responses with appropriate color coding give the illustration in Fig.~2 in the main paper. A similar distinction between cycles with band-pass filters can also be empirically verified for other cases covered by Lemma~\ref{lem:supp}. We leave further theoretical studies of this property of graph wavelets to future work.


\section{Further Discussion of Related Work} 

As many applied fields such as Bioinformatics and Neuroscience heavily rely on the analysis of graph-structured data, the study of reliable classification methods has received much attention lately.
In this work, we focus on the particular class of semi-supervised classification tasks, where GCN models \cite{kipf2016semi,li2018deeper} recently proved to be effective. Their theoretical studies reveal however that graph convolutions can be interpreted as Laplacian smoothing operations, which poses fundamental limitations on the approach.
Another branch of GNNs, manifested in \cite{velivckovic2017graph}, introduces self-attention mechanisms to determine adequate node-wise neighborhoods, which in turn alleviate  the mentioned shortcomings of GCN approaches.
Further,~\cite{nt2019revisiting} developed a theoretical framework based on graph signal processing, relying on the relation between frequency and feature noise, to show that GNNs perform a low-pass filtering on the feature vectors. In~\cite{abu2019mixhop}, multiple powers of the adjacency matrix were used to learn the higher-order neighborhood information, while~\cite{liao2019lanczosnet} used Lanczos algorithm to construct a low-rank approximation of the graph Laplacian that efficiently gathers multiscale information, demonstrated on citation networks and the QM8 quantum chemistry dataset.
Finally,~\citep{xu2019graph} studied  wavelets on graphs and collected higher-order neighborhood information based on wavelet transformation.

Together with the study of learned networks, recent studies have also introduced the construction of geometric scattering transforms, relying on manually crafted families of graph wavelet transforms~\citep{gama2019diffusion,gao2019geometric,zou2019graph}.
Similar to the initial motivation of geometric deep learning to generalize convolutional neural networks, the geometric scattering framework generalizes the construction of Euclidean scattering from~\cite{mallat2012group} to the graph setting.
Theoretical studies~\citep[e.g.,][]{gama2019diffusion,gama2019stability,perlmutter2019understanding} established energy preservation properties and the stability of these generalized scattering transforms to perturbations and deformations of graphs and signals on them. 
Moreover, the practical application of geometric scattering to whole-graph data analysis was studied in~\cite{gao2019geometric}, achieving strong classification results on social networks and biochemistry data, which established the effectiveness of this approach.

As discussed in the main paper, this work aims to combine the complementary strengths of GCN models and geometric scattering and to provide a new avenue for incorporating richer notions of regularity in GNNs. Further, our construction integrates trained task-driven components in geometric scattering architectures. Finally, while most previous work on geometric scattering focused on whole-graph settings, we consider node-level processing, which requires new considerations about the construction.

\section{Technical Details}

Similar to other neural networks, the presented Scattering GCN poses several architecture choices and hyperparameters that can be tuned and affect its performance. For simplicity, we set the last layer before the output classification to be the residual convolution layer and only consider one or two hybrid layers before it, each consisting of three GCN channels and two scattering channels. We note that this restricted setup simplifies the network tuning process and was sufficient in our experiments to obtain promising results (outperforming other methods, as shown in the main paper), but can naturally be generalized further to deeper or wider architectures in practice. Furthermore, based on preliminary results, \emph{Cora}, \emph{Citeseer} and \emph{Pubmed} were set to use only one hybrid layer as the addition of a second one was not cost-effective (considering the added complexity of a grid search for tuning hyperparameters based on validation results). For \emph{DBLP}, two layers were used due to a significant increase in performance. We note that even with a single hybrid layer our model achieves $73.1\%$ test accuracy (compared to the reported $81.5\%$ for two layers) and still significantly outperforms GAT ($66.1\%$) and the other methods (below $60\%$).



\paragraph{Validation \& testing procedure:} All tests were done using train-validation-test splits of the datasets, where validation accuracy is used for tuning hyperparameters and test accuracy is reported in the comparison table. The same splits were used for all methods for a fair comparison. To ensure our evaluation is comparable with previous work, for \emph{Citeseer}, \emph{Cora} and \emph{Pubmed} we used the same settings as in~\cite{kipf2016semi}, following the standard practice used in other work reporting results on these datasets. For \emph{DBLP}, as far as we know, no common standard is established in the literature. Here, we used a ratio of $5:1:1$ between train, validation, and test.

\paragraph{Hyperparameter tuning:} Given the general network architectures and train-validation-test splits, the hyperparameter tuning was performed for each dataset using grid search guided by validation accuracy. The grid covered the tuning of the residual convolution via $\alpha$, the nonlinarity exponent $q$ (inspired by scattering moments), the scattering channel configuration (i.e., scales used in these two channels), and the widths of channels in the network. The results of this tuning process are presented in the following table.
\begin{center}
\begin{tabular}{|c||c|c|cc|ccccc|}
\multicolumn{3}{c}{~} & \multicolumn{2}{c}{$\overbrace{\hspace{60pt}}^\text{Scat.\ config.:}$} & \multicolumn{5}{c}{$\overbrace{\hspace{130pt}}^\text{Channel widths:}$}\\
\cline{2-10}
\multicolumn{1}{c}{~} & \multicolumn{1}{|c}{$\alpha$} & \multicolumn{1}{|c}{$q$} & \multicolumn{1}{|c}{$\boldsymbol{U}_{J_1}$} & \multicolumn{1}{c}{$\boldsymbol{U}_{J_2}$} & \multicolumn{1}{|c}{$\,\boldsymbol{A}^1\,$} & \multicolumn{1}{c}{$\,\boldsymbol{A}^2\,$} & \multicolumn{1}{c}{$\,\boldsymbol{A}^3\,$} & \multicolumn{1}{c}{$\boldsymbol{U}_{p_1}$} & \multicolumn{1}{c|}{$\boldsymbol{U}_{p_2}$} \\
\hline
\emph{Citeseer} & 0.50 & 4 & $\boldsymbol{\Psi}_2$& $\boldsymbol{\Psi}_2|\boldsymbol{\Psi}_3|$  & 10 & 10 & 10 & 9 & 30 \\
\emph{Cora} & 0.35 & 4 & $\boldsymbol{\Psi}_1 $& $\boldsymbol{\Psi}_3$ & 10 & 10 & 10 & 11 & 6\\
\emph{Pubmed} & 1.00 & 4 & $\boldsymbol{\Psi}_1 $& $\boldsymbol{\Psi}_2$ & 10 & 10 & 10 & 13 & 14 \\
\emph{DBLP (1st layer)} & 1.00 & 4 & $\boldsymbol{\Psi}_1 $& $\boldsymbol{\Psi}_2$ & 10 & 10 & 10 & 30 & 30 \\
\emph{DBLP (2nd layer)} & 0.10 & 1 & $\boldsymbol{\Psi}_1 $  & $\boldsymbol{\Psi}_2 $  & 40 & 20 & 20 & 20 & 20 \\
\hline
\end{tabular}
\end{center}
It should be noted that for \emph{DBLP}, the hybrid-layer parameters are shared between the two used layers in order to simplify the tuning process, which was generally less exhaustive than for the other three datasets, since even with limited tuning our method significantly outperformed all other methods. That being said, we note that the difference in effectiveness of architecture and hyperparameter choices (as well as the increased performance of our approach compared to others) observed in this case could be a result of the significantly different connectivity exhibited by its graph as discussed briefly in Sec.~8 (of the main paper). Regardless, as more exhaustive tuning would not degrade (and likely improve) the results obtained from Scattering GCN, we view our limited tuning done here as sufficient for establishing the advantages provided by our approach over other methods and leave a more intensive study of the \emph{DBLP} dataset to future work.

\paragraph{Hardware \& software environment:}
All comparisons were executed on the same HPC cluster with intel i7-6850K CPU and NVIDIA TITAN X Pascal GPU. Scattering GCN was implemented in Python using the PyTorch \cite{paszke2019pytorch} framework. Implementations of all other methods were taken directly from the code accompanying their publications.

\section{Ablation Study}
The two main components of our Scattering GCN architecture contribute together to achieve significant improvements over pure GCN models.
Namely, these are the additional scattering channels (i.e., $\boldsymbol{U}_{J_1}$ and $\boldsymbol{U}_{J_2}$) and the residual convolution (controlled by the hyperparemeter $\alpha$). To further explore their contribution and the hyperparameter space for their tuning, Tab.~\ref{table:cora1}-\ref{table:cora5} show classification results over the Cora dataset for $\alpha=0.01,0.1,0.35,0.5,1.0$ (controlling the residual convolution layer) over multiple scattering channel configurations. For presentation brevity and simplicity, we focus our presented ablation benchmark on this dataset here, but note that similar results are also observed on the other datasets. The rows and columns in each table denote the two scattering channels used in the Scattering GCN, together with the three GCN channels (i.e., for $\boldsymbol{A}^k$, $k=1,2,3$).

\begin{table}[!b]
\begin{minipage}{0.48\textwidth}
\caption{Classification accuracies on Cora with $\alpha = 0.01$ with average accuracy 80.4\% over all scales. }
\label{table:cora1}
\centering
\begin{small}
\begin{sc}
\adjustbox{width=\textwidth}{
\begin{tabular}{|c|c|c|c|c|c|}
\hline
Accuracy & $\boldsymbol{\Psi}_1 $    & $\boldsymbol{\Psi}_2$     & $\boldsymbol{\Psi}_3 $    & $\boldsymbol{\Psi}_2|\boldsymbol{\Psi}_3|$     & $\boldsymbol{\Psi}_1|\boldsymbol{\Psi}_2|$     \\ \hline
$\boldsymbol{\Psi}_1 $         & 0.808 & 0.808 & 0.805 & 0.806 & 0.806 \\ \hline
$\boldsymbol{\Psi}_2 $         & 0.809 & 0.809 & 0.806 & 0.806 & 0.806 \\ \hline
$\boldsymbol{\Psi}_3 $         &  0.802& 0.804 & 0.801 & 0.801 & 0.800 \\ \hline
$\boldsymbol{\Psi}_2|\boldsymbol{\Psi}_3|$         &   0.802   &  0.804 & 0.801 & 0.800 & 0.799 \\ \hline
$\boldsymbol{\Psi}_1|\boldsymbol{\Psi}_2|$         &   0.802  &   0.804    &  0.801     &   0.800   & 0.800  \\ \hline
\end{tabular}
}
\end{sc}
\end{small}
\end{minipage}
\hfill
\begin{minipage}{0.48\textwidth}
\caption{Classification accuracies on Cora with $\alpha = 0.1$ with average accuracy 80.9\% over all scales. }
\label{table:cora2}
\centering
\begin{small}
\begin{sc}
\adjustbox{width=\textwidth}{
\begin{tabular}{|c|c|c|c|c|c|}
\hline
Accuracy & $\boldsymbol{\Psi}_1 $    & $\boldsymbol{\Psi}_2$     & $\boldsymbol{\Psi}_3 $    & $\boldsymbol{\Psi}_2|\boldsymbol{\Psi}_3|$     & $\boldsymbol{\Psi}_1|\boldsymbol{\Psi}_2|$     \\ \hline
$\boldsymbol{\Psi}_1 $         & 0.813& 0.813 & 0.812 & 0.808 & 0.809 \\ \hline
$\boldsymbol{\Psi}_2 $         & 0.817 & 0.817 & 0.810& 0.810 & 0.815 \\ \hline
$\boldsymbol{\Psi}_3 $         & 0.810& 0.808& 0.801 & 0.800 & 0.806 \\ \hline
$\boldsymbol{\Psi}_2|\boldsymbol{\Psi}_3|$         &   0.812   & 0.811& 0.801 & 0.800 & 0.809\\ \hline
$\boldsymbol{\Psi}_1|\boldsymbol{\Psi}_2|$         &  0.813  &   0.811    &  0.802     &   0.802   & 0.809  \\ \hline
\end{tabular}
}
\end{sc}
\end{small}
\end{minipage}

\begin{minipage}{0.48\textwidth}
\caption{Classification accuracies on Cora with $\alpha = 0.35$ with average accuracy 83.5\%.}
\label{table:cora3}
\begin{center}
\begin{small}
\begin{sc}
\adjustbox{width=\textwidth}{
\begin{tabular}{|c|c|c|c|c|c|}
\hline
Accuracy & $\boldsymbol{\Psi}_1 $    & $\boldsymbol{\Psi}_2$     & $\boldsymbol{\Psi}_3 $    & $\boldsymbol{\Psi}_2|\boldsymbol{\Psi}_3|$     & $\boldsymbol{\Psi}_1|\boldsymbol{\Psi}_2|$     \\ \hline
$\boldsymbol{\Psi}_1 $         & 0.838 & 0.836 & 0.835 & 0.837 & 0.837 \\ \hline
$\boldsymbol{\Psi}_2 $         & {\bf 0.842} & 0.836 & 0.835 & 0.838 & 0.837 \\ \hline
$\boldsymbol{\Psi}_3 $         &  0.835     &   0.836    & 0.833 & 0.833  & 0.832  \\ \hline
$\boldsymbol{\Psi}_2|\boldsymbol{\Psi}_3|$  &  0.835     &   0.836   &    0.833   & 0.833 & 0.831 \\ \hline
$\boldsymbol{\Psi}_1|\boldsymbol{\Psi}_2|$         &  0.835     &  0.836   &    0.832  &   0.833    & 0.831 \\ \hline
\end{tabular}
}
\end{sc}
\end{small}
\end{center}
\end{minipage}
\hfill
\centering
\begin{minipage}{0.48\textwidth}
\caption{Classification accuracies on Cora  with $\alpha = 0.5$ with average accuracy 82.7\% over all scales.}
\label{table:cora4}
\centering
\begin{small}
\begin{sc}
\adjustbox{width=\textwidth}{
\begin{tabular}{|c|c|c|c|c|c|}
\hline
Accuracy & $\boldsymbol{\Psi}_1 $    & $\boldsymbol{\Psi}_2$     & $\boldsymbol{\Psi}_3 $    & $\boldsymbol{\Psi}_2|\boldsymbol{\Psi}_3|$     & $\boldsymbol{\Psi}_1|\boldsymbol{\Psi}_2|$     \\ \hline
$\boldsymbol{\Psi}_1 $       & 0.828  & 0.828 & 0.836 & 0.836 &0.835\\ \hline
$\boldsymbol{\Psi}_2$        &  0.828&0.833& 0.830 & 0.830 &0.827\\ \hline
$\boldsymbol{\Psi}_3 $        & 0.821& 0.829 &0.826 & 0.826 & 0.826 \\ \hline
$\boldsymbol{\Psi}_2|\boldsymbol{\Psi}_3|$        & 0.820& 0.828  & 0.826 & 0.825 &0.825 \\ \hline
$\boldsymbol{\Psi}_1|\boldsymbol{\Psi}_2|$        & 0.821 & 0.829  &0.824  & 0.824 & 0.824\\ \hline
\end{tabular}
}
\end{sc}
\end{small}
\end{minipage}

\begin{minipage}{0.48\textwidth}
\caption{Classification accuracies on Cora  with $\alpha = 1.0$ with average accuracy 82.3\% over all scales.}
\label{table:cora5}
\centering
\begin{small}
\begin{sc}
\adjustbox{width=\textwidth}{
\begin{tabular}{|c|c|c|c|c|c|}
\hline
Accuracy & $\boldsymbol{\Psi}_1 $    & $\boldsymbol{\Psi}_2$     & $\boldsymbol{\Psi}_3 $    & $\boldsymbol{\Psi}_2|\boldsymbol{\Psi}_3|$     & $\boldsymbol{\Psi}_1|\boldsymbol{\Psi}_2|$     \\ \hline
$\boldsymbol{\Psi}_1 $       & 0.817  & 0.818  & 0.827 & 0.827 & 0.824\\ \hline
$\boldsymbol{\Psi}_2$        &  0.820 & 0.819& 0.824 & 0.824 & 0.823 \\ \hline
$\boldsymbol{\Psi}_3 $        & 0.826 & 0.823 &0.823 & 0.823 & 0.821 \\ \hline
$\boldsymbol{\Psi}_2|\boldsymbol{\Psi}_3|$        & 0.825& 0.823  & 0.823 & 0.823 & 0.821 \\ \hline
$\boldsymbol{\Psi}_1|\boldsymbol{\Psi}_2|$        & 0.822 & 0.823  & 0.823  & 0.823 & 0.821\\ \hline
\end{tabular}
}
\end{sc}
\end{small}
\end{minipage}
\end{table}

First, we consider the importance of the residual graph convolution layer. We note that setting $\alpha = 0$ effectively ignores this layer (i.e., by setting its operation to be the identity), while increasing $\alpha$ makes the filtering provided by it to be more dominant until $\alpha=1$, where it essentially becomes a random walk-based low-pass filter. Therefore, to evaluate the importance of this component of our architecture, it is sufficient to evaluate the impact of $\alpha$ on the classification accuracy. Indeed, our results (see Tab.~\ref{table:cora1}-\ref{table:cora5}) indicate that increasing $\alpha$ to non-negligible nonzero values improves classification performance, which we interpret to be due to the removal of high-frequency noise. However, when $\alpha$ further increases (in particular when $\alpha=1$ in this case) the smoothing provided by this layer degrades the performance to a level close to the traditional GCN~\citep{kipf2016semi}. Therefore, these results suggest that when well tuned (e.g., as done via grid search in this work), the graph residual convolution plays a critical role in improving results, which can also be seen by the results of the hyperparameter tuning shown in the previous section.

Next, we consider the scales used in the two scattering channels of our hybrid architecture, which correspond to the rows and columns of Tab.~\ref{table:cora1}-\ref{table:cora5} here. While our results show that the network is relatively robust to this choice, we can observe that generally utilizing purely second-order coefficients gives slightly worse results than either first-order ones or a mix of first- and second-order coefficients. Nevertheless, most configurations of scattering scales (with appropriate choice of $\alpha$) give better results than pure GCN, thus indicating that added information is extracted by scattering channels.

We note that for $\alpha = 0.35$ (Tab.~\ref{table:cora3}), all scale configurations outperform GAT (83\%) and all other reported methods in Tab.~2 (of the main paper). As a result, even the average accuracy over all scale configurations (83.5\%) in this case shows an improvement over these other methods, thus further establishing the advantage of our approach. It is important to mention that while this improvement is affected by the tuning of the residual convolution, it also relies on the addition of scattering channels. Indeed, by itself, the residual convolution layer only applies a low-pass (smoothing) filter and therefore, without scattering channels, would essentially be equivalent to a conventional GCN.

We remark that the results presented in this work are based on a limited grid search, while the ablation study here indicates that many of the possible configurations provide noticeable classification performance improvement over other methods. It is likely that the reported results, in fact, provide a lower bound on the improvement attained by Scattering GCN, while a more exhaustive optimization of the architecture and its hyperparameters can further improve and solidify its advantages. We leave such exhaustive study for future work, which will also consider the incorporation of other advanced components (e.g., attention mechanisms) in the model architecture.

Finally, to further validate the importance of band-pass information added by the presented architecture here, we provide an ablation study of the impact each channel has on classification performance. We focus here on $\alpha=0.35$ with the best configuration on Cora (i.e., achieving 84.2\% accuracy when all channels are used). Then, we remove each of the band-pass or low-pass channels individually from the network, and reevaluate network performance with the remaining four channels. Our results, presented in Tab.~\ref{table:ablation_channnels}, indicate that while information captured by $\boldsymbol{A}$,$\boldsymbol{A}^2$ and $\boldsymbol{A}^3$ is important for the classification task, which is to be expected given the prevalence of such filters in GCNs, the band-pass information extracted by the scattering channels (with $\boldsymbol{\Psi}_2$ and $\boldsymbol{\Psi}_1$ in this case) plays a crucial role in achieving the performance of our method. In particular, we note that $\boldsymbol{\Psi}_1$ in this case has a major impact on the accuracy, driving the difference between underperforming and outperforming GAT, thus strengthening the claim that important information in graph features is contained in higher frequencies extracted by band-pass filtering, which is not recovered by smoothing operations.

\begin{table}[!h]
\centering

\begin{sc}

\caption{Impact of removing each individual channel from the optimal configuration on Cora, while classifying using the remaining four channels. Full Scattering GCN accuracy provided for reference.}
\label{table:ablation_channnels}
\begin{tabular}{|c|c|c|c|c|c|c|c|}
\hhline{------|~|-}
Removed Channel &  $\boldsymbol{A} $    & $\boldsymbol{A}^2$     & $\boldsymbol{A}^3 $    & $\boldsymbol{\Psi}_2$     & $\boldsymbol{\Psi}_1$ & $\,$ & Scattering GCN \\
\hhline{------|~|-}
Accuracy & 82.0 & 80.7 & 80.9 & 83.7 & 82.7 & $\,$ & 84.2\\
\hhline{------|~|-}
\end{tabular}
\end{sc}

\end{table}

\section{Implementation}

Python code accompanying this work is available on \href{http://github.com/dms-net/scatteringGCN}{github.com/dms-net/scatteringGCN}.

\end{document}